\definecolor{lb}{rgb}{0.231, 0.459, 0.686}
\newcolumntype{L}[1]{>{\raggedright\arraybackslash}p{#1}}
\newcolumntype{C}[1]{>{\centering\arraybackslash}p{#1}}
\newcolumntype{R}[1]{>{\raggedleft\arraybackslash}p{#1}}
\theoremstyle{plain}
\newtheorem{theorem}{Theorem}[section]
\theoremstyle{definition}
\newtheorem{definition}[theorem]{Definition}
\theoremstyle{remark}
\def\s{\mathbf{s}}
\def\R{\mathbb{R}}
\def\a{\mathbf{a}}
\begin{document}

%
\runningtitle{What Ails Generative Structure-based Drug Design?}

%

\twocolumn[

\aistatstitle{What Ails Generative Structure-based Drug Design:\\  Expressivity is Too Little or Too Much?}

\aistatsauthor{ Rafał Karczewski \And Samuel Kaski \And  Markus Heinonen \And Vikas Garg }

\aistatsaddress{ Aalto University \And  Aalto University \\ University of Manchester \And Aalto University \And Aalto University \\ YaiYai Ltd. } ]

\begin{abstract}
Several generative models with elaborate training and sampling procedures have been proposed to accelerate structure-based drug design (SBDD); however, their empirical performance turns out to be suboptimal. We seek to better understand this phenomenon from both theoretical and empirical perspectives. Since most of these models apply graph neural networks (GNNs), one may suspect that they inherit the representational limitations of GNNs. We analyze this aspect, establishing the first such results for protein-ligand complexes. A plausible counterview may attribute the underperformance of these models to their excessive parameterizations, inducing expressivity at the expense of generalization. We investigate this possibility with a simple metric-aware approach that learns an economical surrogate for affinity to infer an unlabelled molecular graph and optimizes for labels conditioned on this graph and molecular properties. The resulting model achieves state-of-the-art results using 100x fewer trainable parameters and affords up to 1000x speedup. Collectively, our findings underscore the need to reassess and redirect the existing paradigm and efforts for SBDD. Code is available at \url{https://github.com/rafalkarczewski/SimpleSBDD}.
\end{abstract}

\section{INTRODUCTION}

 Identifying new molecules or {\em ligands} that bind well to a protein target and have desired properties is a key challenge of Structure-based drug design (SBDD). Since experimental determination of binding is expensive and time-consuming, deep generative models offer hope to accelerate SBDD by rapidly suggesting good candidate molecules for a given target protein, using experimental or \emph{in silico} binding observations \citep{vamathevan2019applications, bilodeau2022generative}.  Several approaches have been proposed, including autoregressive models \citep{pocket2mol, sbddAR}, variational autoencoders \citep{sbddVAE}, reinforcement learning \citep{mcts}, and diffusion models \citep{diffsbdd, target-diff, decompdiff, EQGAT-diff}.

Despite a surge in interest from the machine learning community, these methods are observed to underperform empirically in terms of the docking scores, i.e., the estimated binding affinities of the candidates they generate. These findings are particularly surprising since many of the aforesaid methods jointly optimize for atom types and 3D coordinates (often using GNN-based architectures), with intricate training and sampling procedures. Since the success of SBDD hinges on identifying candidates that dock well \citep{Tobiasz2023}, 
understanding this phenomenon holds key to utilizing the promise of generative modeling in SBDD. 

We take an initial but important step in this pursuit by proposing two contrasting hypotheses that shed light on different aspects of the problem. First, we investigate, theoretically, the possibility that the shortcomings of message-passing GNNs \citep{GNNOrig2009, geometricWL2023},  e.g., due to their inability to compute graph properties \citep{gnns_repr}, might propagate to the protein-ligand complexes, yielding insufficient representations. It turns out that we can expose the limits on the expressivity of GNNs to distinguish ligands conditioned on the same protein target. To our knowledge, these are the first such results in conditional drug discovery contexts, motivating the design and analysis of more effective models that can mitigate the representational limitations.

Second, we also study and analyze, empirically, a counterview that the generative models for SBDD  might already be getting too sophisticated, owing, e.g., to their excessive parameterizations, and simpler methods may provide strong baselines. Some preliminary evidence on the efficacy of simple baselines for the unconditional molecule generation setting, but not SBDD, has emerged recently \citep{tripp2023genetic} stemming from similar concerns. 

Concretely, we identify a key issue that has eluded attention heretofore; namely, that the existing methods predominantly focus on obtaining rich representations for drug properties such as QED with bulky models, while being oblivious to binding affinity as they seek to match the observed ligand distribution  (which often includes molecules with suboptimal affinities). We address these issues with a novel simple two-phase generative method {\em SimpleSBDD} that draws inspiration from the success of performance-aware methods in other domains \citep{joachims05multivariateperformance},  and optimizes for both the metrics of interest, i.e. molecular properties and estimated binding affinity.

Specifically, we first learn an economical affinity surrogate to infer an unlabelled molecular graph, and then optimize its atom labels and coordinates for other properties. Albeit segregating optimization over structure from labels loses some information, we observe that our approach already achieves state of the art in SBDD with up to 1000x speed up and 100x fewer parameters than prior methods. Importantly, we represent the unlabelled molecular graphs with features that are unlearnable by GNNs.

Note that such massive reduction in computation and model size bears significant potential benefits for the docking software in particular, and the overall SBDD pipeline in general. Streamlining the generation of {\em  hits}, i.e., promising initial candidates that bind well to a specified protein target would afford re-channeling of limited resources for {\em lead} optimization as well as more stringent and precise validation stages than docking, namely, design and analysis of simulations for molecular dynamics as well as wet lab experiments. 

\subsection{Contributions of this work}
We summarize our contributions below. This work 
\begin{enumerate}
    \item {\bf (Topical)} draws attention to the disparity between the promise and the observed empirical performance of generative models for SBDD, outlining concerns pertaining to their expressivity;   
    \item{\bf (Theoretical)} initiates a formal analysis for limits on the expressivity of methods for SBDD, exposing the representational challenges inherent in GNN models for encoding protein-ligand complexes;
    \item{\bf (Methodological)} 
    introduces performance-aware optimization in SBDD settings, using a simple generative model that optimizes for both docking scores and physicochemical properties;  and
    \item {\bf (Empirical)} demonstrates the versatility of the proposed approach SimpleSBDD using three instantiations tailored to problems of drug repurposing (suggesting candidates from a known ligand set), generation of novel ligands, and property optimization that seeks to control both molecular properties and binding. SimpleSBDD outperforms existing methods with orders of magnitude fewer parameters and faster runtime.    
\end{enumerate} 


\section{PRELIMINARIES AND RELATED WORK}\label{sec:prelims}
\textbf{Deep (Geometric) Learning for Drug Design}. Deep learning has enabled progress on various facets of drug design across molecular generation \citep{jtvae, moflow, modflow, graphAF, graphDF, edm, difflinker}, molecular optimization \citep{mol_opt} and drug repurposing  \citep{drug_rep}. It has also accelerated advancements in protein folding and design \citep{alphafold, OmegaFold, openfold, esm1, esm2, protseed, rfdiffusion, chroma, foldingdiff, abode} and docking \citep{equibind, equidock, diffdock}. We refer the reader to the surveys by \citet{chen2018rise} and \citet{pandey2022transformational}.

\textbf{Graph Neural Networks (GNNs)} have emerged as a workhorse for modeling graph data, such as molecules \citep{GNNOrig2009, gcn, graphsage, gat}. They can be extended to encode important symmetries in geometric graphs \citep{egnn}, so have found success across diverse tasks in the drug design pipeline including docking \citep{diffdock} and molecular property prediction \citep{egnn_for_prop_pred}.

However, GNNs have known representational limits \citep{xu2018how, Morris2019,  Maron2019,gnns_repr, expressivity-1, expressivity-2, geometricWL2023}. There have been improvements proposed \citep{improving-1, improving-2, improving-3} albeit potentially at the expense of generalization \citep{egnn_generalization}.

\paragraph{Molecular properties}\label{sec:mol_props} 
The goal of SBDD is to generate drug candidates with high binding affinity to a target protein, while controlling for other properties. We list below common molecular properties, calculated in practice using the RDKit software \citep{rdkit}, that are  relevant to drug design \citep{bilodeau2022generative, pocket2mol}: 1) Binding affinity -- the strength of the connection between the ligand and the protein. A common surrogate for affinity is the Vina score \citep{vina}; 2) LogP -- logarithm of the partition coefficient, which is a solubility indicator; 3) QED -- quantitative estimate of drug-likeness, a mixture of properties correlated with drugs; 4) SA -- Synthetic accessibility, an estimate of how easy it would be to synthesize the molecule; and 5) Lipinski's Rule of Five -- A heuristic about whether a molecule would be an active oral drug.

\begin{figure*}[t]
 \centering    
    \includegraphics[width=0.95\textwidth]{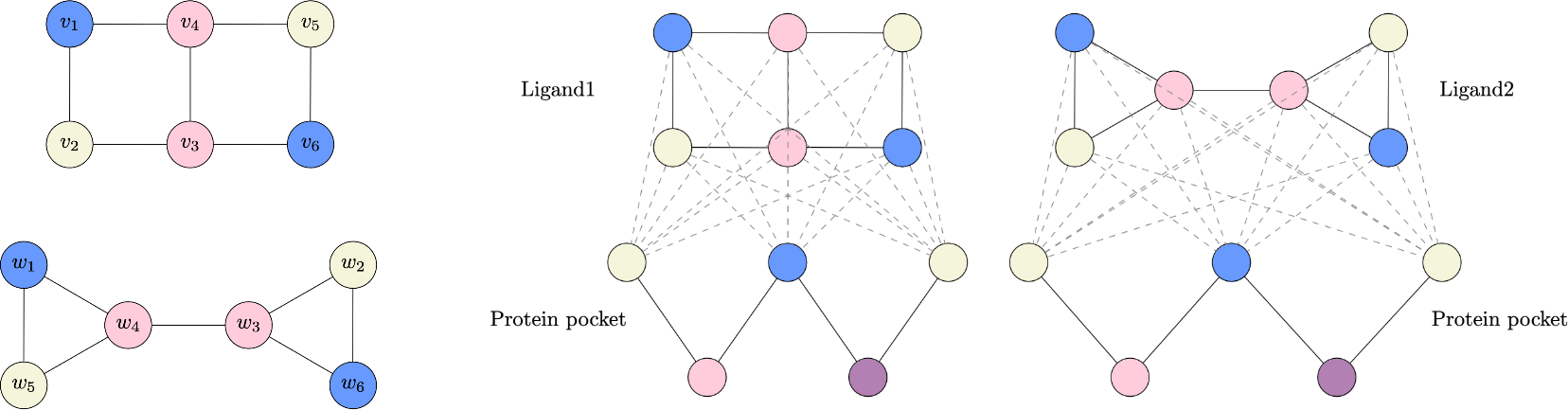}
        \caption{\textbf{Some ligands cannot be distinguished by GNNs even with additional protein context} Left: Construction for Lemma \ref{lemma:3d_single_body}; Two non-isomorphic graphs differing in all properties stated in Lemma \ref{lemma:3d_single_body}, but for which LU-GNNs produce identical embeddings. Right: Complex graphs constructed by joining the ligand graphs with the same protein graph remain identical whenever ligand graphs cannot be differentiated.}
        \label{fig:lu-gnns}
\end{figure*}

\textbf{Setting}.
We view proteins as labelled graphs: each pocket $G = (V,E)$ consists of nodes $V = \{v_1, \ldots, v_N\}$ with $v_i$ representing atoms, and edges $E \subseteq V \times V$. Each node $v = (a, \s)$ is associated with an atom type $a \in \mathcal{A} = \{\texttt{C}, \texttt{N}, \texttt{O}, \ldots \}$ and 3D coordinates $\s \in \R^3$.
We include superscripts $G^P$ and $G^M$ to distinguish between protein and molecule graphs. We also distinguish between a labelled molecular graph $G^M$ and unlabelled one $U^M$. The unlabelled graph contains information about the number of nodes and edges between them, but no information about atom types or 3D coordinates.

The problem of SBDD can be posed as modeling the conditional distribution $p(G^M | G^P)$. The challenge is to generate candidate ligands with high binding affinity to a target protein. We begin our analysis with a theoretical examination of expressivity limits of GNNs in the SBDD context.

\section{REPRESENTATION LIMITS OF GNNS FOR SBDD}\label{sec:th_repr}
The generative models for SBDD are quickly increasing in size (see Table \ref{tab:main_results}) aiming to expand their expressivity. However, we bring attention to the representational issues concerning large models. Specifically, we show that there exist molecular graphs that cannot be distinguished by GNN models regardless of their depth, even in the additional protein context. Consequently, certain graph properties, including the number of cycles, diameter, and sizes of cycles, cannot be learned by message-passing graph neural networks (MPGNNs). Interestingly, as we will demonstrate in Section \ref{sec:fastsbdd}, these features are very useful in predicting binding affinity.

It is known that there exist distinct molecular structures that cannot be distinguished using message-passing graph neural networks (MPGNNs) \citep{wl_mol, gnns_repr}.
Following the notation of \citet{gnns_repr}, we consider \textit{Locally Unordered} GNNs (LU-GNNs), summarized as follows. The updated embedding $h_v^{(l)}$ of node $v$ at layer $l$ is defined as
\begin{equation*}
\begin{split}
    m_{u \to v}^{(l-1)} & = \phi(h_u^{(l-1)}, e_{uv}) \\
    \Tilde{h}_v^{(l-1)} & = \mbox{AGG} \lbrace m_{u \to v}^{(l-1)} \: | \: u \in N(v) \rbrace \\
    h_v^{(l)} & = \mbox{COMBINE} \lbrace h_v^{(l-1)}, \Tilde{h}_v^{(l-1)} \rbrace,
\end{split}
\end{equation*}
where $N(v)$ denotes the set of neighbours of $v$, $e_{uv}$ are edge features and $\phi$ is any function. It is known that there exist non-isomorphic graphs, for which LU-GNNs produce identical embeddings \citep{gnns_repr}. We now note that this result also holds for graphs embedded in 3D space. Let us introduce \textit{Locally Unordered 3D} GNNs (LU3D-GNN) which aggregate embeddings for each node based on the embeddings of the neighbours, as well as their distance to these neighbours. Formally, in LU3D-GNNs, the messages are defined as:
\begin{equation}
    m_{u \to v}^{(l-1)} = \phi\left(h_u^{(l-1)}, e_{uv}, \lVert x_u - x_v \rVert\right)
\end{equation}
where $x_v$ is the 3D position of $v$. We have the following result.

\begin{restatable}[]{lemma}{singlebody}
\label{lemma:3d_single_body}
    There exist connected non-isomorphic geometric graphs that differ in the number of conjoined cycles, girth, size of the largest cycle and cut-edges that LU3D-GNNs cannot distinguish.
\end{restatable}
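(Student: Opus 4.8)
The plan is to exhibit an explicit pair of geometric graphs and verify two things: (i) they are indistinguishable by LU3D-GNNs, and (ii) they differ in all four stated invariants. For the construction, I would start from the known pair of non-isomorphic graphs that LU-GNNs cannot distinguish in the combinatorial (non-geometric) setting --- the standard example being two $6$-regular-like configurations, or more concretely the disjoint-versus-joined cycle example depicted in Figure~\ref{fig:lu-gnns} (one graph a single long cycle, the other two shorter cycles sharing structure, arranged so that every node has the same local neighborhood multiset at every layer). Such a pair exists by the cited result of \citet{gnns_repr}. The key point is that LU-GNN indistinguishability is witnessed by a bijection $\psi$ between the node sets under which, for every node $v$, the multiset of neighbors' states is preserved at each layer; this is exactly the condition that makes the \mbox{AGG}/\mbox{COMBINE} recursion produce identical embeddings.

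Next I would upgrade this to the geometric setting. The extra ingredient in an LU3D-GNN message is the scalar $\lVert x_u - x_v\rVert$. So it suffices to choose 3D coordinates for the two graphs such that the bijection $\psi$ above additionally preserves all edge lengths: for every edge $(u,v)$, $\lVert x_u - x_v\rVert = \lVert x_{\psi(u)} - x_{\psi(v)}\rVert$. Since both graphs have bounded degree and we are free to embed them in $\R^3$, I would place corresponding edges at identical lengths (e.g.\ assign each ``orbit'' of edges under $\psi$ a common length and realize the resulting edge-length system by a generic placement in $\R^3$, which is possible because $\R^3$ gives enough freedom to avoid unwanted coincidences while meeting the prescribed edge lengths). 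Then the LU3D-GNN message function receives identical arguments along $\psi$-corresponding edges at layer $0$, and by induction identical node states at every layer, so the two geometric graphs are LU3D-indistinguishable.

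Finally, I would check the invariants. The combinatorial pair from \citet{gnns_repr} (or the Figure~\ref{fig:lu-gnns} construction) should already be chosen so that the two graphs differ in girth, in the size of the largest cycle, in the number of cut-edges (bridges), and in the number of ``conjoined cycles'' (cycles sharing an edge/vertex); if the off-the-shelf pair does not separate all four at once, I would take a small modification --- e.g.\ attach identical pendant paths or identical gadgets symmetrically to both graphs (which preserves LU-GNN and LU3D-GNN indistinguishability, since the gadgets are added in a $\psi$-equivariant way) to force the desired discrepancy in one invariant without disturbing the others. Connectedness is arranged the same way. This is really the step requiring care: one must find a single pair that simultaneously witnesses GNN-blindness and differs in every listed quantity, and confirm that none of the symmetric augmentations used to tune one invariant accidentally equalizes another.

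The main obstacle I anticipate is the simultaneous-separation bookkeeping in the last paragraph: LU-GNN indistinguishability is a fairly rigid symmetry condition (it forces a great deal of local homogeneity), so pushing the two graphs far enough apart in girth, largest-cycle size, bridge count, \emph{and} number of conjoined cycles --- all at once, while keeping the witnessing bijection intact and keeping a consistent 3D edge-length realization --- takes a carefully engineered gadget rather than an off-the-shelf example. The geometric lift itself (matching edge lengths in $\R^3$) I expect to be routine once the combinatorial construction is fixed.
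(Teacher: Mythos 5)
Your proposal is correct and follows essentially the same route as the paper: the paper's proof simply exhibits the explicit pair from Figure~\ref{fig:lu-gnns} (reproduced in Appendix~\ref{app:repr_limits}), gives all edges equal length so that the distance argument $\lVert x_u-x_v\rVert$ adds no discriminating information, and verifies indistinguishability via a node pairing with identical depth-$l$ computation trees for every $l$ --- exactly the bijection-plus-edge-length-matching argument you describe. The only difference is that the paper commits to a concrete six-node witness that already separates all four invariants at once (so no gadget tuning is needed), whereas you leave that final construction as the step requiring care.
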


On the lefthand side of Figure \ref{fig:lu-gnns}, we show two graphs that are not isomorphic and differing in all mentioned properties, but which cannot be distinguished by LU3D-GNNs. For clarity of presentation, we presented the graphs in 2D, but any 3D configuration having all edges of equal length can be used.
We prove that they cannot be distinguished by LU3D-GNNs in Appendix \ref{app:repr_limits}.

We extend this result to the SBDD context. Specifically, we analyze pairs of graphs corresponding to protein-ligand complexes. Let $G_1=(V_1, E_1), G_2=(V_2, E_2)$ be any two graphs and $\mathcal{C}(G_1, G_2)$ denote a \textit{complex} graph, i.e. $\mathcal{C}(G_1, G_2)= (V, E)$, where
\begin{equation}
\label{eq:complex_graph}
    V = V_1 \cup V_2 \mbox{ and } E = E_1 \cup E_2 \cup V_1 \times V_2
\end{equation}
and, importantly, features for all added edged $e_{uv}  \in V_1 \times V_2$ only depend on the features of nodes at their endpoints. We have the following results.
\begin{restatable}[]{proposition}{multibody}
\label{prop:multi_body}
\leavevmode
\begin{enumerate}[label={(\roman*)}]
    \item If $G_1$ and $G_2$ are indistinguishable for LU-GNNs, then for any graph P,  the complex graphs $\mathcal{C}(P, G_1)$, $\mathcal{C}(P, G_2)$ are also indistinguishable for LU-GNNs. \label{prop:1}
    \item There exist ligand graphs $G_1, G_2$ differing in graph properties listed in Lemma \ref{lemma:3d_single_body}, such that for any protein $P$, the complexes $\mathcal{C}(P, G_1)$, $\mathcal{C}(P, G_2)$ are indistinguishable for models using LU3D-GNNs or LU-GNNs for intra-ligand and intra-protein message passing and LU-GNNs for inter ligand-protein message passing. \label{prop:2}
\end{enumerate}
\end{restatable}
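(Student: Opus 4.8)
The plan is to recast indistinguishability under LU-GNNs (and its LU3D counterpart) as equality of the color histograms produced by the corresponding color-refinement (1-WL) procedure at every round, and then to control how that refinement runs on the complex graph $\mathcal{C}(P,G_i)$. Write $C^{(l)}_i$ for the round-$l$ refinement coloring of $\mathcal{C}(P,G_i)$. Two structural features of the complex do the work. First, by \eqref{eq:complex_graph} every protein node is joined to all of $V_i$ and to no other node outside $P$, and every ligand node is joined to all of $V(P)$ and to no other node outside $G_i$; so a protein node only ever perceives the ligand through the \emph{multiset} of current ligand colors, and a ligand node only ever perceives the protein through the multiset of current protein colors. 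Second, each bridging edge $e_{pv}$ carries a feature that is a function of the atom types of its two endpoints only (consistent with the role of these nodes in LU-/LU3D-GNN message passing, where coordinates enter solely through intra-graph distances); moreover, in the construction of Lemma~\ref{lemma:3d_single_body} all ligand atoms share one type, so there $e_{pv}$ is in fact a function of $p$ alone. I also use the elementary fact that a refinement color always determines the original atom type of its node, which upgrades ``matching color histograms'' to ``matching $(\text{color},\,e_{pv})$ histograms''.

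For part~(i) I would run a \emph{simultaneous} induction on the round $l$ establishing: (I) $C^{(l)}_1$ and $C^{(l)}_2$ agree on $V(P)$ (the protein sub-coloring is literally identical in the two complexes); and (II) $\{\!\{C^{(l)}_1(v):v\in V_1\}\!\} = \{\!\{C^{(l)}_2(v):v\in V_2\}\!\}$. The base case is immediate: colors are atom types, and $G_1$ indistinguishable from $G_2$ forces equal atom-type histograms and equal $|V_1|,|V_2|$. For (I) at round $l+1$: a protein node's update ingests its unchanged intra-$P$ neighborhood together with the multiset $\{\!\{(C^{(l)}_i(v),e_{vp}):v\in V_i\}\!\}$, which by the structural facts (and ``color determines atom type'') is a function of the ligand color histogram alone, hence equal across the two complexes by (II) at round $l$. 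For (II): once (I) pins the protein coloring down, the in-complex update of a ligand node reads $C^{(l+1)}_i(v)=\mathrm{hash}\bigl(C^{(l)}_i(v),\{\!\{(C^{(l)}_i(u),e_{uv}):u\in N_{G_i}(v)\}\!\},\,B^{(l)}(v)\bigr)$, where the side term $B^{(l)}(v)$ depends only on the atom type of $v$ and the (shared) protein coloring; this is an ordinary message-passing refinement of $G_i$, so $C^{(l)}_i$ is a fixed relabeling $\gamma_l$ of the $l$-round 1-WL coloring of $G_i$, with $\gamma_l$ the same for $i=1,2$. Since $G_1$ indistinguishable from $G_2$ for LU-GNNs means their 1-WL histograms agree at every round, applying $\gamma_l$ yields (II). Combining (I) and (II), $\mathcal{C}(P,G_1)$ and $\mathcal{C}(P,G_2)$ have equal color histograms at every round, hence identical LU-GNN graph embeddings.

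For part~(ii) I would take $G_1,G_2$ to be the geometric graphs of Lemma~\ref{lemma:3d_single_body} and repeat the same induction essentially verbatim, with one modification: the intra-ligand and intra-protein edges now additionally carry their geometric lengths $\lVert x_u-x_v\rVert$ --- exactly what promotes 1-WL to the LU3D refinement --- whereas the bridging edges still carry only type-based features, because the inter (ligand--protein) message passing is restricted to LU-GNN. The bridge therefore still leaks no coordinate information, so $B^{(l)}(v)$ remains well-defined without any registration of $G_i$ against $P$, and the in-complex ligand update is now an LU3D refinement of $G_i$, which by Lemma~\ref{lemma:3d_single_body} fails to separate $G_1$ from $G_2$. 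Identical graph embeddings follow as before, and the property gap (number of conjoined cycles, girth, size of the largest cycle, cut-edges) is inherited directly from Lemma~\ref{lemma:3d_single_body}.

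The crux is the coupling inside the simultaneous induction: protein colors and ligand colors feed into each other across the complete bipartite bridge, so neither side can be refined in isolation. The resolution is the observation that each side sees the other only through a permutation-invariant aggregate --- the protein through the ligand's color histogram, the ligand through a function of its own type and the protein's coloring --- so the cross-talk entering round $l+1$ is determined by data the induction has already certified equal across the two complexes. The only other point needing care is the bookkeeping that a refinement color determines the atom type (and hence the bridging-edge feature), which is what makes the protein-update step compatible with mere histogram equality on the ligand side.
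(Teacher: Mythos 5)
Your proposal is correct and takes essentially the same route as the paper: the paper runs the identical simultaneous induction on rounds --- protein sub-coloring literally shared between the two complexes, ligand side matched via the correspondence inherited from $G_1\equiv G_2$, plus the auxiliary fact that the in-complex ligand color is determined by the stand-alone one --- only phrased in the language of computation trees rather than 1-WL color histograms, and its part~(ii) likewise reduces to part~(i) after observing that the bridging edges carry no geometric information. Your explicit bookkeeping that a refinement color determines the atom type and hence the bridging-edge feature is a point the paper leaves implicit, but it is the same argument.
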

The righthand side of Figure \ref{fig:lu-gnns} visualizes the proposed statement. The proof can be found in Appendix \ref{app:repr_limits}.
Proposition \ref{prop:multi_body} shows that the representational issues of GNNs carry over to the context of SBDD. Specifically, this shows that there exist distinct protein-ligand structures that cannot be distinguished with LU-GNN models regardless of their depth or number of parameters. Importantly, LU-GNNs' inability to distinguish graphs that differ in certain properties implies that LU-GNNs are unable to learn these properties.

\begin{figure*}
    \centering
    \includegraphics[width=0.95\textwidth]{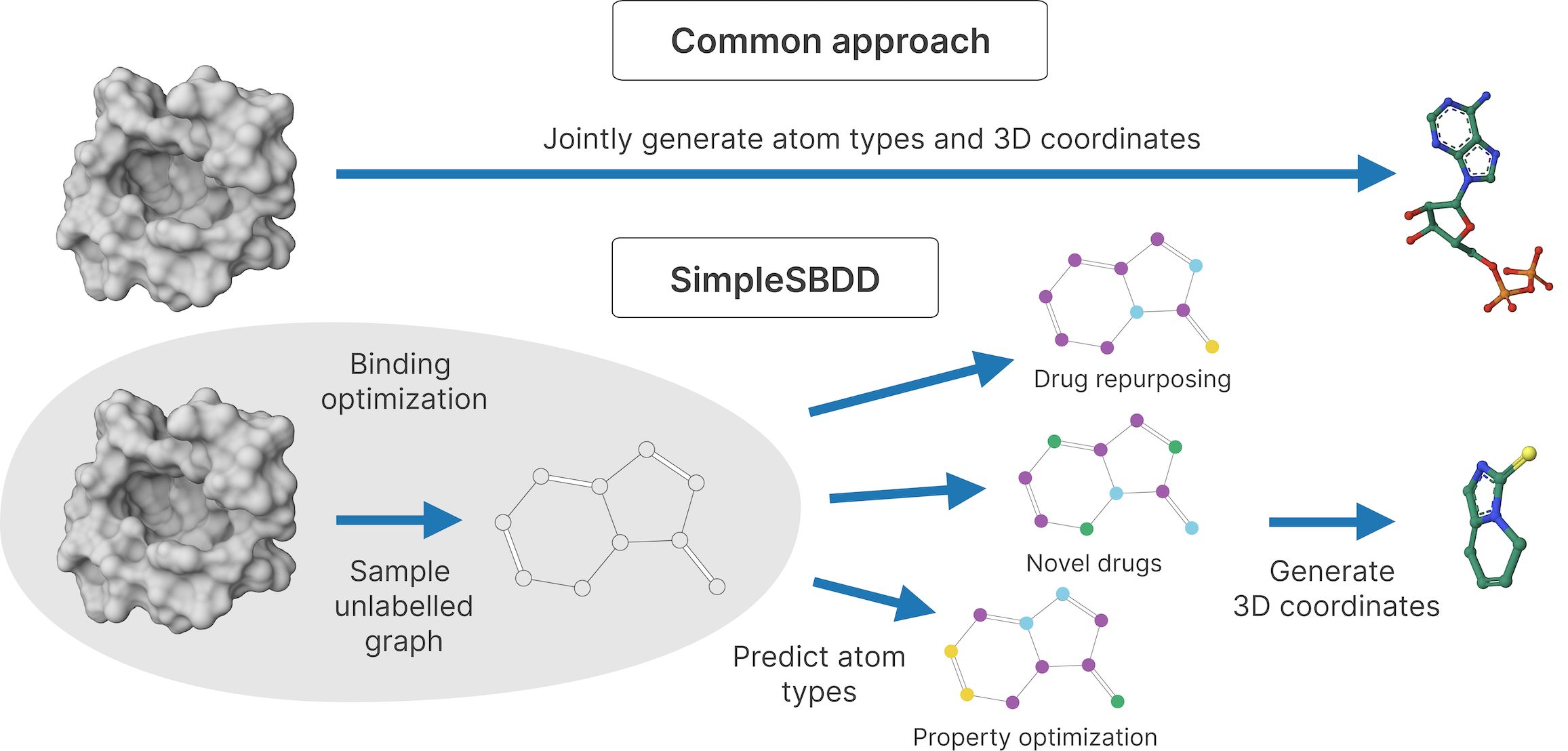}
    \caption{\textbf{Comparison of SimpleSBDD to common approaches}. Top: SBDD approaches commonly learn to approximate the data distribution of atom types and 3D coordinates conditioned on the protein pocket. Bottom: SimpleSBDD first generates the unlabelled graph explicitly optimized for estimated binding affinity. Then it predicts atom types using different strategies designed for solving different tasks independently of the protein pocket. Finally, it generates a 3D configuration.}
    \label{fig:model_flow}
\end{figure*}

We now take a different perspective and show that generative models for SBDD can be significantly reduced in size and computational complexity without sacrificing performance.

\section{SIMPLE STRUCTURE-BASED DRUG DESIGN}
\label{sec:fastsbdd}

In this section, we propose a significantly simplified generative model for SBDD with two crucial components: innovative model decomposition and explicit incorporation of estimated binding optimization within the model. This differs from the prevailing paradigm that primarily focuses on learning the data distribution.

\subsection{Decoupling the unlabelled molecular graph from atom types}\label{sec:decomp_just}

One of the main innovations underlying our framework is the separation of molecular representation into the unlabelled molecular graph and the labels, the atom types. 
Our motivation to adopt such an approach is primarily rooted in empirical findings that highlight the strong relationship between the structure of molecules and their predicted binding affinities, as approximated by the Vina software.
In a preliminary experiment, we investigated how Vina binding scores behave when we modify the atom types and coordinates while keeping the underlying graph intact.

Remarkably, our analysis showed a high correlation at $\rho=0.83$, between the scores of the original molecules and their modified counterparts.
This result highlights that a considerable portion of the variability in predicted binding affinity can be attributed to the information contained within the unlabelled molecular graph itself, even before incorporating atom-specific details (details in Appendix \ref{app:model_decomp}). Note that features characterizing the unlabeled molecular graph, such as the diameter and the number of rings, are unlearnable by LU-GNNs (Section \ref{sec:th_repr}). We will leverage this insight in our model.

It is important to recognize that these findings pertain to the Vina software and not the biological process of binding.
While Vina is the most prevalent approximation, it remains just that – an approximation.
Nonetheless, upon conducting similar experiments with Gnina \citep{gnina}, a reportedly significantly more accurate binding approximation than Vina, we found similar results (Appendix \ref{app:gnina}).
This suggests that the observed behavior is not solely an artifact of Vina but may be more widespread.

Inspired by these findings, we propose a new model SimpleSBDD (Figure \ref{fig:model_flow}). We first generate the high-binding unlabelled graph structures $U^M$, and generate atom types $\a^M$ independently of the protein. The model is defined to approximate the conditional distribution
\begin{equation}
    p(G^M | G^P) \approx p(\a^M, \s^M | U^M\cancel{, G^P}) p(U^M | G^P),
\end{equation}
where we assume atoms $\a^{M}$ are conditionally independent of the protein pocket $G^P$ given the unlabelled graph structure $U^M$. This approximation does incur information loss as expected. However, as we will see in the next section, performance-aware learning mitigates this effect.
The generative model thus reduces to two independent components: unlabelled graph sampler and atom sampler. We discuss them below.

\subsection{Performance-aware learning}
\label{sec:struct_sampler}

Typically, the unlabelled graph model $p(U^M | G^P)$ is trained to match the data distribution. However, we hypothesize that observed ligand-protein complexes contain examples with suboptimal binding, and propose to focus on the most promising complexes with the following procedure. We first learn a scoring model $g_\theta$ to predict binding affinity from unlabelled graph alone,
\begin{equation}\label{eq:scoring_model}
    g_{\theta}(U^M, G^P) \approx \mbox{Vina}(G^M, G^P).
\end{equation}
To represent $U^M$ we use properties describing its structure such as its number of rings, number of rotatable bonds and graph diameter, which cannot be learned by MPGNNs (See Appendix \ref{app:vina_approx} for details).
Next, we use $g_\theta$ to score unlabelled graphs from a repository of molecules, and choose those with the best predicted affinities. We define a generative distribution 
\begin{equation}
\label{eq:connectivity_sampler}
\begin{split}
    &p_{\theta}(U^M | G^P) = \mbox{UNIFORM}(U(G^P)), \\
    &U(G^P)  = \big\{ U^M \in \mathcal{U}^{M} \: | \: g_{\theta}(U^M, G^P) \in \left[v_{\mathrm{min}}, v_{\mathrm{max}} \right]  \big\},
\end{split}
\end{equation}
where $\mathcal{U}^{M}$ is a database of unlabelled graph structures, and $v_{\mathrm{min}}, v_{\mathrm{max}}$ are threshold hyperparameters. We set $v_{\mathrm{min}}$ and $v_{\mathrm{max}}$ to be the 5th and 10th percentiles of all predictions for $\mathcal{U}^{M}$ to avoid outliers (Appendix \ref{app:ba_qed_tradeoff}). As $\mathcal{U}^{M}$, we used the ZINC250k \citep{zinc} instead of the CrossDocked2020 \citep{crossdocked}, which only contains 8K unique ligands.

We note that instead of sampling from a database, one can define a generative model over unlabelled graphs.
We have not investigated this due to substantial molecular variability even with predefined unlabelled graphs.
Specifically, one can generate at least a thousand distinct and chemically diverse molecules sharing the same unlabelled graph (Appendix \ref{app:chemical_diversity}).

\paragraph{Training data for $g_{\theta}$}
To train the scoring model $g_{\theta}$, we need pairs of protein-ligand pairs with their affinity estimates. We chose not to use the CrossDocked2020 dataset, the gold standard for SBDD, for the following reasons: 1) even though it contains 100K protein-ligand pairs, there are only around 8K unique ligands, 
\begin{figure}
\begin{subfigure}{0.23\textwidth}
         \centering
         \includegraphics[width=\textwidth]{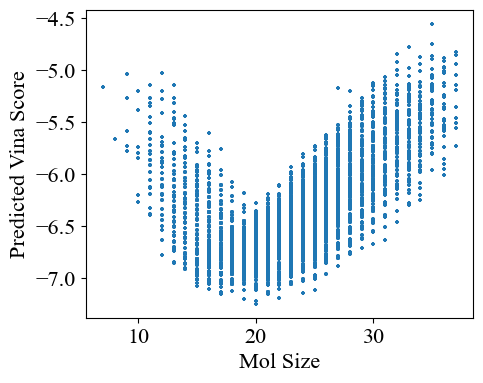}
         \caption{Optimal ligand size $\approx 20$}
         \label{fig:ba_vs_size_a}
     \end{subfigure}
     \hfill
     \begin{subfigure}{0.23\textwidth}
         \centering
         \includegraphics[width=\textwidth]{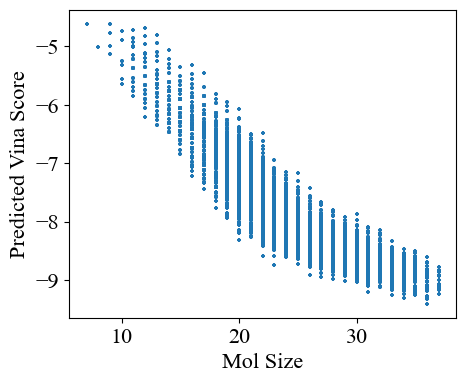}
         \caption{Optimal ligand size $\ge 38$}
         \label{fig:ba_vs_size_b}
     \end{subfigure}
    \caption{\textbf{Optimal ligand size has two modes}. Representative examples showing that the scoring model learns the optimal ligand size which is different for different proteins.}
    \label{fig:ba_vs_size}
\end{figure}
2) for a given protein pocket there may exist ligands with better affinity than what is present in the data and 3) for the scoring model to assign good values only to good structures, it needs to be trained also on examples with very poor binding affinity which are not available.

To address the above, we constructed a dataset ourselves. We sampled 1000 diverse ligand-protein complexes from CrossDocked2020, and included 50 additional random molecules from the ZINC250k. This resulted in 51,000 pairs, for which we computed the Vina scores. The scoring model $g_{\theta}$ was trained with stochastic gradient descent to minimize the mean squared error between prediction $g$ and Vina scores. See Appendix \ref{app:scoring_model_training} for details.

\paragraph{Scoring model detects optimal molecule size}
We visualize the predictions of $g$ as a function of the size of the molecule. We notice that protein pockets fall under two cases: either there is an optimal size of the ligand (Figure \ref{fig:ba_vs_size_a}) or there is a monotonic relationship favouring larger ligands (Figure \ref{fig:ba_vs_size_b}). This is because the largest available molecule is 38 atoms, which is smaller than the learned optimum.

\subsection{Atom sampler}
\label{seq:atom_sampler}
The atom sampler $p(\a^M, \s^M | U^M)$ generates atom types and coordinates of each atom. As mentioned in Section \ref{sec:decomp_just}, the precise 3D conformation has marginal effect on the Vina score (due to redocking), and therefore we simply generate any valid 3D configuration with RDKit. We choose a factorisation
\begin{equation}
    p(\s^M, \a^M | U^M) = p( \s^M | \a^M, U^M) p(\a^M | U^M),
\end{equation}
where $p( \s^M | \a^M, U^M)$ is an off-the-shelf conformation generation model available in RDKit, which we discuss in more detail in Appendix \ref{app:pose_var}. In practice, the 3D configuration of the ligand must be in the vicinity of the protein pocket in order to correctly compute its binding affinity using Vina. We therefore train a separate model which predicts the center of mass of the ligand based on the protein pocket, and use it to center the conformers. See Appendix \ref{app:com_predictor} for more details.

To define the generative model over the atom types given unlabelled structure, we adapt an existing generative model MoFlow \citep{moflow}, consisting of an unlabelled graph model $f_U$ and a conditional atom model $f_{\a | U}$.
The atom model induces a conditional distribution $p_{f_{\a | U}}(\a^M | U^M)$, which we use as our atom sampling distribution:
\begin{equation}\label{eq:atom_type_sampler}
    p(\a^M|U^M) \coloneqq p_{f_{\a | U}}(\a^M | U^M).
\end{equation}
We use a pre-trained MoFlow, so omit it from the trainable parameter count (Table \ref{tab:main_results}).

\begin{table*}[t]
   \small
    \caption{\textbf{SimpleSBDD finds high-quality drug candidates up to 1000x faster than competing methods}. Properties of CrossDocked2020 data and generation methods. $\dagger$ denotes method run exclusively on CPU. "-" denotes that a metric was not reported and we were unable to estimate. Error bars correspond to the standard deviation across test protein pockets.
    See Appendix \ref{app:baselines} for comparison details.}
    \label{tab:main_results}
    \centering
    \resizebox{0.95\linewidth}{!}{
    \begin{tabular}{L{3cm} C{1.8cm} C{1.7cm} c c c c C{1.7cm} c}
    \toprule
         & Vina Score (kcal/mol, ↓) & High Affinity (↑) &  QED (↑) & SA (↑) & Diversity (↑) & Novelty (↑) & \#Params (↓) & Time (s, ↓)  \\
    \midrule
         Test set & -6.99 \textcolor{gray}{± 2.16} &  - & 0.48 \textcolor{gray}{± 0.21} & 0.73 \textcolor{gray}{± 0.14} & - &- &- & - \\
         \cmidrule(lr){1-9}
        DiffSBDD (\citeyear{diffsbdd}) & -6.29 \textcolor{gray}{± 1.93} & 0.37 \textcolor{gray}{± 0.31} & 0.49 \textcolor{gray}{± 0.19} & 0.63 \textcolor{gray}{± 0.14} & \textbf{0.79} \textcolor{gray}{± 0.07} & \textbf{0.54}  \textcolor{gray}{± 0.14}&  3.5M & 135 \textcolor{gray}{± 52} \\
        Pocket2Mol (\citeyear{pocket2mol}) & -7.10 \textcolor{gray}{± 2.56} & 0.55 \textcolor{gray}{± 0.31} & 0.57 \textcolor{gray}{± 0.16} & 0.74 \textcolor{gray}{± 0.13}& 0.72 \textcolor{gray}{± 0.16} & 0.45  \textcolor{gray}{± 0.16}& 3.7M & 2504 \textcolor{gray}{± 220} \\
        FLAG (\citeyear{flag}) & -7.25 \textcolor{gray}{± 2.25} & 0.58 \textcolor{gray}{± 0.24} & 0.50 \textcolor{gray}{± 0.17} & 0.75 \textcolor{gray}{± 0.16}& 0.70 \textcolor{gray}{± 0.15} &0.44 \textcolor{gray}{± 0.17}& 11M & 1048 \textcolor{gray}{± 682} \\
        DrugGPS (\citeyear{drug-gps}) & -7.28 \textcolor{gray}{± 2.14} & 0.57 \textcolor{gray}{± 0.23} & \textbf{0.61} \textcolor{gray}{± 0.22} & 0.74 \textcolor{gray}{± 0.18}& 0.68 \textcolor{gray}{± 0.15} &0.47 \textcolor{gray}{± 0.15}& 14.7M & 1008 \textcolor{gray}{± 554} \\
        TargetDiff (\citeyear{target-diff}) & -6.91 \textcolor{gray}{± 2.25} & 0.52 \textcolor{gray}{± 0.32} & 0.48 \textcolor{gray}{± 0.20} & 0.58 \textcolor{gray}{± 0.13} & 0.72 \textcolor{gray}{± 0.09} &0.47  \textcolor{gray}{± 0.14}& 2.5M & 3428 \textcolor{gray}{± NA} \\
        DecompDiff (\citeyear{decompdiff}) & -6.76 \textcolor{gray}{± 1.64} & 0.46 \textcolor{gray}{± 0.36} & 0.45 \textcolor{gray}{± 0.21} & 0.61 \textcolor{gray}{± 0.14}& 0.68 \textcolor{gray}{± 0.10} &0.52  \textcolor{gray}{± 0.13}& 5.0M & 6189 \textcolor{gray}{± NA} \\
        D3FG (\citeyear{D3FG}) & -6.96 \textcolor{gray}{± NA} & 0.46 \textcolor{gray}{± NA} & 0.50 \textcolor{gray}{± NA} & \textbf{0.84} \textcolor{gray}{± NA}& - &-& - & - \\
        EQGAT-diff (\citeyear{EQGAT-diff}) & -7.42 \textcolor{gray}{± 2.33} & - & 0.52 \textcolor{gray}{± 0.18} & 0.70 \textcolor{gray}{± 0.20}& 0.74 \textcolor{gray}{± 0.07} &-& 12.3M & - \\
        \cmidrule(lr){1-9}
        SimpleSBDD (Ours) & \textbf{-7.78} \textcolor{gray}{± 1.47} & \textbf{0.71} \textcolor{gray}{± 0.34} & \textbf{0.61} \textcolor{gray}{± 0.18} & 0.69 \textcolor{gray}{± 0.09}& 0.68 \textcolor{gray}{± 0.06}	 &0.51  \textcolor{gray}{± 0.10}&\textbf{23K} & \textbf{3.9}$^\dagger$ \textcolor{gray}{± 0.9} \\
    \bottomrule
    \end{tabular}
    }
\end{table*}
\section{RESULTS}\label{sec:results}

\paragraph{Data}
We follow \citet{pocket2mol, diffsbdd} and use the CrossDocked2020 dataset \citep{crossdocked} for evaluating the models. We use the same train-test split, which separates protein pockets based on their sequence similarity computed with MMseqs2 \citep{mmseq} and contains 100,000 train and 100 test protein-ligand pairs.

\subsection{Assessing general properties of sampled molecules}\label{sec:gen-comparison}
We follow the evaluation procedure described by \citet{pocket2mol}. For each of 100 test protein pockets, we sample 100 molecules. As evaluation metrics we report 1) \textbf{Vina Score} estimating binding affinity \citep{vina}, 2) \textbf{High Affinity}, which is the percentage of generated molecules with binding affinity at least as good as ground truth, 3) \textbf{QED}, a quantitative estimate of druglikeness, 4) \textbf{SA}, synthetic accesibility, 5) \textbf{Diversity} defined by average pairwise Tanimoto dissimilarity for the generated molecules in the pocket, 6) \textbf{Novelty}, defined by average Tanimoto dissimilarity to the most similar molecule from training set, 7) \textbf{\#{Params}}, the number of trainable parameters and 8) \textbf{Time} to generate 100 molecules in seconds. We include more molecular metrics in Appendix \ref{app:additional_metrics}. As baselines, we use eight recent methods for SBDD: Pocket2Mol \citep{pocket2mol}, DiffSBDD \citep{diffsbdd}, TargetDiff \citep{target-diff}, DecompDiff \citep{decompdiff}, FLAG \citep{flag}, DrugGPS \citep{drug-gps}, D3FG \citep{D3FG} and EQGAT-diff \citep{EQGAT-diff}.

We report the results in Table \ref{tab:main_results}. Clearly, SimpleSBDD significantly outperforms all baselines in terms of estimated binding affinity while being up to 1000x faster despite being run solely on CPU (all baselines use GPU for sampling). Additionally, our method has only 23k trainable parameters, which is two orders of magnitude less than the baselines. We provide more details on the baseline comparison in Appendix \ref{app:baselines}. We also provide a visualization of our model predictions in Figure \ref{fig:qualitative}.

\subsection{Additional evaluation}\label{sec:add_eval}
We have demonstrated that SimpleSBDD is very competitive on the standard evaluation criteria for SBDD models. We now discuss its broader applications.

\paragraph{Property optimization} Due to our novel decomposition of the model into the unlabelled graph and atom labels, we can optimize the estimated binding affinity and other properties simultaneously. This is especially important, since molecules without desirable drug-like properties are of diminished practical utility even if the predicted binding is strong. Specifically, given the target protein we first generate an unlabelled graph with strong predicted binding affinity. As we noted in Section \ref{sec:struct_sampler} and elaborated on in Appendix \ref{app:chemical_diversity}, there is large diversity in the chemical properties of molecules that share the unlabelled graph. Therefore, for the sampled unlabelled graph, there is enough flexibility in atom types to enable molecular property optimization.

As an illustrative example, we follow \citet{gomez2018automatic} and choose $5 \cdot \mbox{QED} + \mbox{SA}$ as the property mixture to optimize alongside predicted binding affinity. Concretely, 50 proposals are sampled using the atom type model defined in Equation (\ref{eq:atom_type_sampler}) and the candidate with the best property values is selected. We denote this model variant SimpleSBDD--$\mathcal{PO}$. With this additional procedure we can significantly improve all molecular metrics without sacrificing binding affinity whilst remaining an order of magnitude faster than baselines. See Appendix \ref{app:sbdd-po} for more details.

\begin{figure*}[t]
 \centering    
    \includegraphics[width=0.93\textwidth]{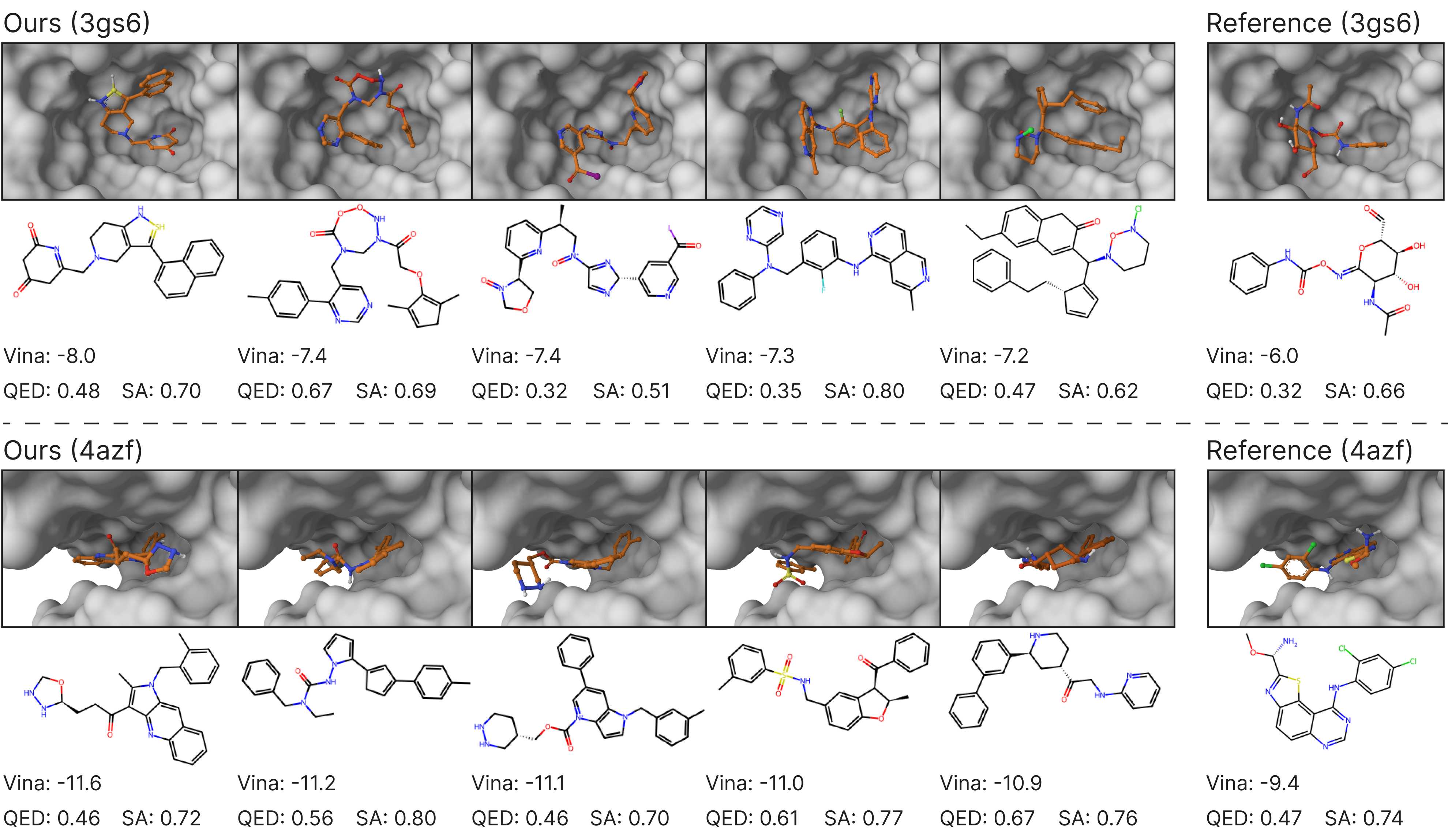}
        \caption{\textbf{SimpleSBDD generates diverse drug candidates with stronger predicted binding than reference molecules.} We visualize the predictions of our model for two randomly chosen proteins (PDB ids 3gs6 and 4azf). We choose 5 molecules with best predicted binding affinities for each protein and compare their Vina score, QED and synthetic accessibility with the reference molecule.}
        \label{fig:qualitative}
\end{figure*}

\paragraph{Comparison with optimization-based methods} So far we have compared our method with multiple baselines which generate molecules conditioned on a target protein. Neither SimpleSBDD nor any of these baselines has access to the Vina software during generation. We now also compare with \textit{optimization-based} methods, i.e. algorithms, which generate molecules by explicitly using Vina during sampling to directly optimize it. Specifically, we compare SimpleSBDD--$\mathcal{PO}$ with the three best performing methods, i.e. Reinforced Genetic Algorithm \citep{rga}, 3D-Monte Carlo Tree Search (3D-MCTS) \citep{3d-mcts} and AutoGrow4.0 \citep{autogrow4}. We present the results in Table \ref{tab:optimization_based_results}.

It is evident that SimpleSBDD--$\mathcal{PO}$ is significantly more efficient than all the other methods. Note that these baselines are not only the best performing, but also the fastest according to \citep{rga}. What makes them much slower than SimpleSBDD are the repeated evaluations of the Vina software, which is approximately 20000x slower than e.g. computing QED (20s vs 0.001s). Furthermore, the only method outperforming SimpleSBDD in terms of Vina score, AutoGrow4, is significantly worse in terms of QED. SimpleSBDD offers an attractive tradeoff between predicted binding affinity and the predicted quality of the generated drug candidates at a much lower computational cost.

\begin{table*}[t]
   \small
    \caption{\textbf{SimpleSBDD is significantly faster than optimization-based methods}. Comparison with optimization-based methods. $\dagger$ denotes method run exclusively on CPU. Error bars correspond to the standard deviation across test protein pockets.}
    \label{tab:optimization_based_results}
    \centering
    \resizebox{0.95\linewidth}{!}{
    \begin{tabular}{L{3cm} C{1.8cm} C{1.7cm} c c c C{1.7cm} c}
    \toprule
         & Vina Score (kcal/mol, ↓) & High Affinity (↑) &  QED (↑) & SA (↑) & Diversity (↑) & \#Params (↓) & Time (s, ↓)  \\
    \midrule
         Test set & -6.99 \textcolor{gray}{± 2.16} &  - & 0.48 \textcolor{gray}{± 0.21} & 0.73 \textcolor{gray}{± 0.14} & - &- & - \\
         \cmidrule(lr){1-8}
        RGA (\citeyear{rga}) & -6.93 \textcolor{gray}{± 1.17} & 0.53 \textcolor{gray}{± 0.41} & 0.46 \textcolor{gray}{± 0.15} & \textbf{0.80} \textcolor{gray}{± 0.07} & \textbf{0.76} \textcolor{gray}{± 0.01} &  341K & 11576 \textcolor{gray}{± 3717} \\
        3D-MCTS (\citeyear{3d-mcts}) & -7.55 \textcolor{gray}{± 1.32} & 0.66 \textcolor{gray}{± 0.38} & 0.65 \textcolor{gray}{± 0.14} & 0.78 \textcolor{gray}{± 0.07} & 0.62 \textcolor{gray}{± 0.07} &  \textbf{0} & 4150 \textcolor{gray}{± 313} \\
        AutoGrow4 (\citeyear{autogrow4}) & \textbf{-8.33} \textcolor{gray}{± 1.55} & \textbf{0.81} \textcolor{gray}{± 0.28} & 0.36 \textcolor{gray}{± 0.17} & 0.67 \textcolor{gray}{± 0.10} & 0.65 \textcolor{gray}{± 0.06} &  \textbf{0} & 10800$^\dagger$ \textcolor{gray}{± 0} \\
        \cmidrule(lr){1-8}
        SimpleSBDD--$\mathcal{PO}$ & -7.98 \textcolor{gray}{± 1.46}  & 0.75 \textcolor{gray}{± 0.35} & \textbf{0.80} \textcolor{gray}{± 0.10} & 0.73 \textcolor{gray}{± 0.08}& 0.67 \textcolor{gray}{± 0.06}	 &23K & \textbf{115}$^\dagger$ \textcolor{gray}{± 11} \\
    \bottomrule
    \end{tabular}
    }
\end{table*}

\paragraph{Efficient scoring for drug repurposing} SBDD is a problem of generating novel drugs likely to bind to a given target pocket.
However, an equally important application is \textit{drug repurposing}, a problem of investigating new uses for existing molecules, which is closely related to determining whether a drug can effectively bind to a specific protein target.
This is a significant application since, unlike samples generated by machine learning models, existing drugs have experimentally verified properties and their synthesizability is established.

We can use our scoring model (Equation (\ref{eq:scoring_model})) to scan databases of existing drugs and select ones with the best predicted binding.
Moreover, our implementation of the scoring model allows us to do that very efficiently due to the fact that protein embedding needs only be computed once when scoring multiple molecules.
Specifically, it allows for \textit{scoring 9100 molecules per second on a single CPU}.

To showcase the practicality and efficiency of our scoring model in the context of drug repurposing, we perform the following experiment.
For each protein pocket, we randomly sample 16384 molecules from a database and select 100 of them with predicted Vina scores between 5th and 10th percentile.
We then compare the average binding affinity and other molecular properties with the reference molecules of the target protein.
We found that for each protein pocket it takes around 2 seconds to find 100 diverse molecules with much higher binding affinity and improved other molecular metrics.
Furthermore, we repeated the experiment with ChEMBL dataset \citep{chembl1, chembl2} and found that $g_\theta$ generalizes well across molecular databases.
See Appendix \ref{app:drug_repurposing} for details.

\section{CONCLUSION AND BROADER IMPACT}\label{sec:conclusion}
We advocate a shift in focus from mere model expressivity to robust generalization.
Specifically, we bring to the forefront inherent limitations in expressivity of GNNs and prove that they carry over to the multibody domain of SBDD.
Furthermore, we show that with a novel model decomposition and explicit binding optimization built into the model, we can outperform state of the art with significantly simplified models.
The unprecedented efficiency and success of SimpleSBDD highlights the potential to reshape the approach to SBDD but also paves the way for streamlining docking software, especially when assessed with widely adopted tools like Vina and Gnina.

However, there are limitations to our findings. In particular, while we rely on docking software such as Vina, the gold standard for evaluation in the SBDD realm, the ideal validation would entail wet lab experiments or molecular dynamics simulations.
That said, such methods are often prohibitively expensive and could utilize the significant resources afforded by models such as SimpleSBDD.

The very nature of SimpleSBDD as a generative model, especially with its property optimization capabilities, brings with it risks.
As SBDD models improve at designing molecules with specific properties, we must remain vigilant of the potential unintended outcomes, since streamlining the design process could accelerate the design of harmful biochemicals.

\section{LIMITATIONS}
We acknowledge limitations of current docking-based approaches in real-world drug discovery. While docking scores are a widely used metric in SBDD, they should be interpreted with caution due to the following concerns:

\textbf{Docking Score Reliability:} A high docking score does not necessarily guarantee success in subsequent molecular simulations or experimental validations. Docking scores indicate the predicted binding pose of a ligand to a protein target and should not be overinterpreted as definitive indicators of biological activity.

\textbf{Dataset Curation and Bias:} The quality and curation of datasets can significantly impact docking outcomes. Protein structures from the Protein Data Bank (PDB) often exist as complexes with specific ligands. It is not uncommon to delete the ligand from the complex and dock new molecules instead. This can influence the docking results.

\textbf{Biological Context:} Docking scores do not distinguish between different functional outcomes of binding, such as whether a molecule acts as an inhibitor or an agonist. This limits their utility in predicting the nuanced biological effects of candidate molecules

\textbf{Correlation with Experimental Data:} Docking scores and experimental binding affinities are not always strongly correlated, and docking methods may lack proper calibration against real-world biochemical and biophysical data.

\textbf{Binding Site Challenges:} The binding site of a protein target might be suboptimal or unsuitable for a candidate ligand, further complicating the interpretation of docking results.

Such limitations underscore the need to treat docking scores with caution, and provide motivation for our work – we urge the community to critically evaluate the existing paradigm and redirect the efforts for SBDD (saving the computation expense and time using accelerated approaches such as the one we advocate, and paying additional attention to more reliable investigations such as molecular simulations).

\subsection*{Acknowledgments}
This work was supported by the Finnish Center for Artificial Intelligence (FCAI) under Flagship R5 (award 15011052). VG also acknowledges the support from Saab-WASP (grant 411025), Academy of Finland (grant 342077), and the Jane and Aatos Erkko Foundation (grant 7001703). RK thanks Paulina Karczewska for her help with preparing figures.

\bibliography{references}

\section*{Checklist}



 \begin{enumerate}

 \item For all models and algorithms presented, check if you include:
 \begin{enumerate}
   \item A clear description of the mathematical setting, assumptions, algorithm, and/or model. [Yes] Section \ref{sec:fastsbdd}
   \item An analysis of the properties and complexity (time, space, sample size) of any algorithm. [Yes] Table \ref{tab:main_results}
   \item (Optional) Anonymized source code, with specification of all dependencies, including external libraries. [Yes] We include code to reproduce our results as part of the submission.
 \end{enumerate}

 \item For any theoretical claim, check if you include:
 \begin{enumerate}
   \item Statements of the full set of assumptions of all theoretical results. [Yes] Proposition \ref{prop:multi_body} and Lemma \ref{lemma:3d_single_body}
   \item Complete proofs of all theoretical results. [Yes] Appendix \ref{app:repr_limits}
   \item Clear explanations of any assumptions. [Yes] Section \ref{sec:th_repr}     
 \end{enumerate}

 \item For all figures and tables that present empirical results, check if you include:
 \begin{enumerate}
   \item The code, data, and instructions needed to reproduce the main experimental results (either in the supplemental material or as a URL). [Yes] The evaluation procedure is described in Section \ref{sec:results} with more details in Appendix \ref{app:baselines} and the submitted code in the supplementary can be used to reproduce our results.
   \item All the training details (e.g., data splits, hyperparameters, how they were chosen). [Yes] Data is described in Section \ref{sec:results} and model details are provide in Appendices \ref{app:vina_approx} and \ref{app:com_predictor}
         \item A clear definition of the specific measure or statistics and error bars (e.g., with respect to the random seed after running experiments multiple times). [Yes] Tables \ref{tab:main_results}, \ref{tab:optimization_based_results}, \ref{tab:additional_metrics}, \ref{tab:po_results} and \ref{tab:repurposing}
         \item A description of the computing infrastructure used. (e.g., type of GPUs, internal cluster, or cloud provider). [Yes] Appendix \ref{app:scoring_model_training} and \ref{app:com_predictor}
 \end{enumerate}

 \item If you are using existing assets (e.g., code, data, models) or curating/releasing new assets, check if you include:
 \begin{enumerate}
   \item Citations of the creator If your work uses existing assets. [Yes] The datasets CrossDocked2020, ZINC250k and ChEMBL datasets are properly cited. The MoFlow model that we used is also properly cited.
   \item The license information of the assets, if applicable. [Yes] Appendix \ref{app:licences}
   \item New assets either in the supplemental material or as a URL, if applicable. [Yes] A newly curated dataset described in Section \ref{sec:fastsbdd} is included in the submitted supplementary material.
   \item Information about consent from data providers/curators. [Not Applicable]
   \item Discussion of sensible content if applicable, e.g., personally identifiable information or offensive content. [Not Applicable]
 \end{enumerate}

 \item If you used crowdsourcing or conducted research with human subjects, check if you include:
 \begin{enumerate}
   \item The full text of instructions given to participants and screenshots. [Not Applicable]
   \item Descriptions of potential participant risks, with links to Institutional Review Board (IRB) approvals if applicable. [Not Applicable]
   \item The estimated hourly wage paid to participants and the total amount spent on participant compensation. [Not Applicable]
 \end{enumerate}

 \end{enumerate}

\newpage
\appendix
\onecolumn
\section{DECOUPLING THE UNLABELLED MOLECULAR GRAPH FROM THE ATOM TYPES}\label{app:model_decomp}
Here we provide details of experiments we discuss in section \ref{sec:decomp_just}.
\subsection{Vina score}
Vina uses a scoring function, which approximates the experimentally measured binding affinity with several types of interactions between the protein and ligand atoms \citep{scoring_function}:
\begin{equation}
\label{eq:scoring_function}
    f(G^{P}, G^{M}) = \sum_{k=1}^K w_k
 \sum_{i=1}^{N_P} \sum_{j=1}^{N_M}  f_k\Big( a_i^{P}, a_j^{M}, \s_i^{P}, \s_j^{M} \Big),
\end{equation}
where $\{f_k\}_{k=1}^K$ represent $K$ types of interatomic interactions that depend on atom types $a$ and 3D coordinates $\s$. The weights $\{w_k\}_{k=1}^K$ are pretrained to reflect experimentally measured affinities.
We emphasize that \textit{lower values of $f$ are interpreted as stronger binding.}
To estimate the binding affinity for a given protein-ligand complex $(G^{P}, G^{M})$, Vina performs a local search for the lowest-score 3D configuration of the molecule (a.k.a. \textit{redocking}). Specifically, gradient optimization is performed over global translations, global rotations, and torsional rotations $\mathcal{T}(G^M)$ of the ligand $G^M$ to optimise its fit to the protein pocket:
\begin{equation}\label{eq:binding_affinity}
    \mbox{Vina}(G^P, G^M) = \min \Big\{ f(G^P, \hat{G}^M) : \hat{G}^M \in \mathcal{T}(G^M) \Big\}.
\end{equation}
Vina uses the BFGS optimiser \citep{BFGS}.
Due to the \textit{redocking} procedure (\ref{eq:binding_affinity}), the Vina score can be interpreted as "binding potential", i.e. molecule’s binding affinity in its optimal pose.
\subsection{Influence of unlabelled molecular graphs on Vina scores}\label{sec:vina_experiments}
Equation (\ref{eq:binding_affinity}) indicates that if Vina employed an ideal optimizer to compute the global minimum, the initial 3D configuration of the molecule would not influence the outcome. Yet, because Vina utilizes an approximate gradient-based optimization method, variations in the initial 3D configuration can indeed affect the Vina score.
To measure that, we randomly selected 1,000 protein-ligand complexes from the CrossDocked2020 dataset \citep{crossdocked} and computed the Vina score both before and after modifying the ligand.
Here, by ``modifying the ligand", we refer to replacing its 3D configuration with a randomly generated conformer using RDKit. As expected, a high correlation at $\rho=0.97$ emerged indicating an insignificant impact of the ligand's initial 3D configuration on the Vina score (See Figure \ref{fig:conformation_invariance1}).

Furthermore, we experimented with changing ligand's atom types.
For a fixed unlabelled graph $U^M$, we re-sample random atom assignments $\a^M$ using MoFlow (Equation \ref{eq:atom_type_sampler}).
This procedure changes the molecule and therefore its 3D configuration is most likely no longer energetically valid.
We therefore use RDKit to generate a random conformer of the new molecule and use it to replace the original 3D coordinates.
Surprisingly, the correlation remains relatively high at $\rho=0.83$ (see Figure \ref{fig:atom_type_invariance1}).
These findings suggest that using just the unlabelled graph of the ligand, one can estimate the Vina score with considerable accuracy.
We use this observation in the definition of the model.
\begin{figure}[t]
    \centering
    \begin{subfigure}[b]{0.48\textwidth}
         \centering
         \includegraphics[width=\textwidth]{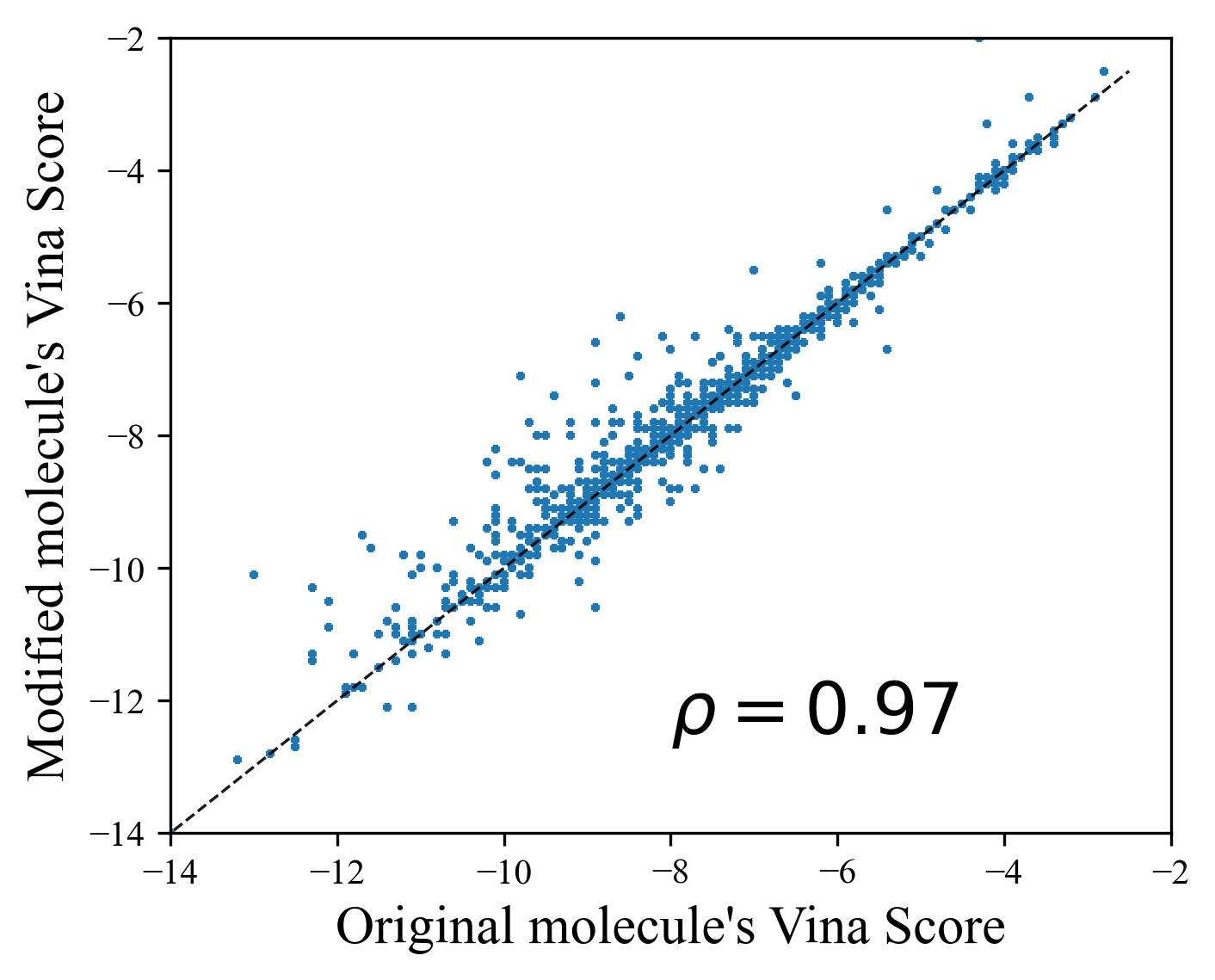}
         \caption{Changing initial 3D configuration.}
         \label{fig:conformation_invariance1}
    \end{subfigure}
    \hfill
    \begin{subfigure}[b]{0.48\textwidth}
         \centering
         \includegraphics[width=\textwidth]{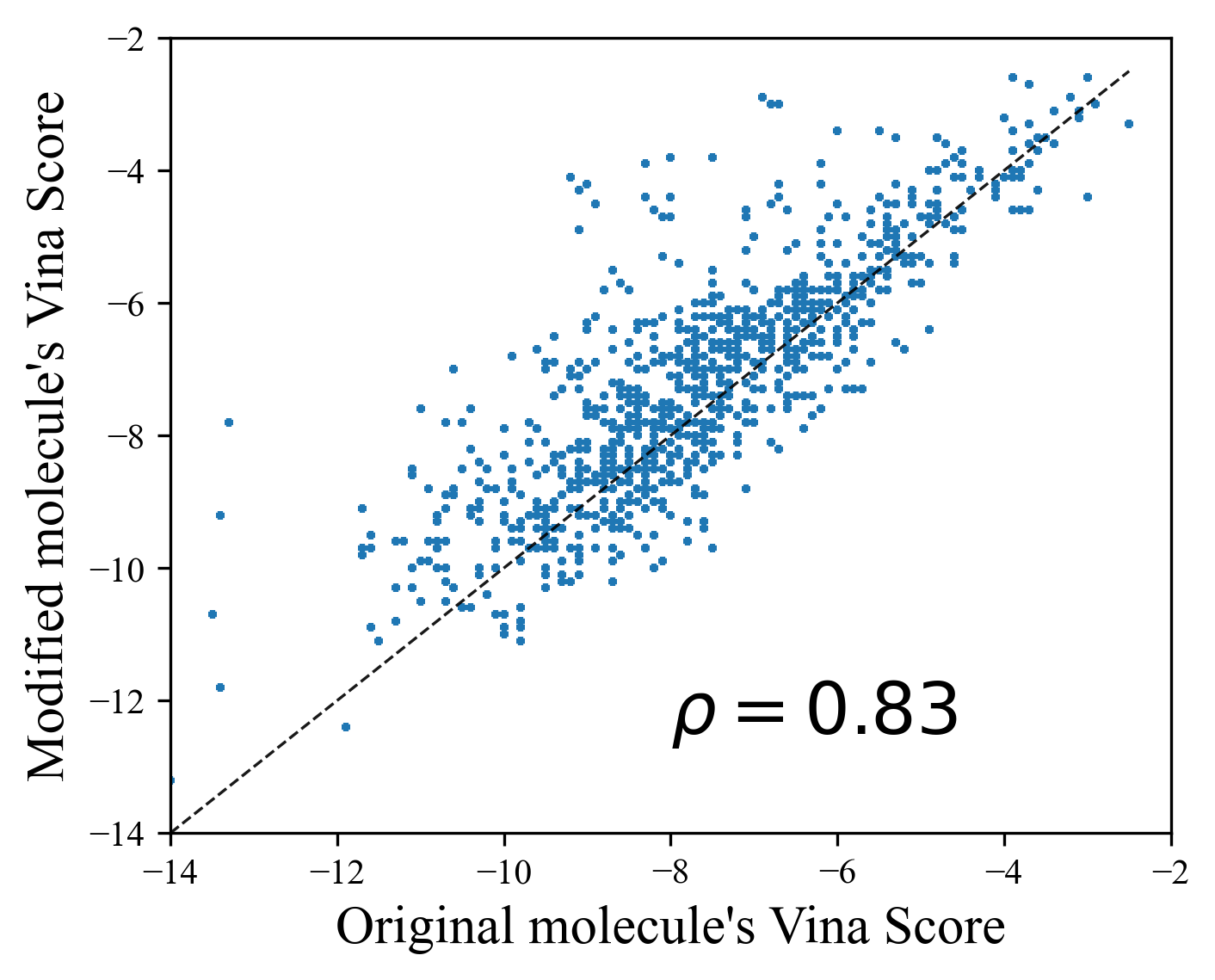}
         \caption{Changing atom types and initial 3D configuration}
         \label{fig:atom_type_invariance1}
    \end{subfigure}
    \caption{Impact of different transformations of the ligand on the Vina score.}
    \label{fig:binding_factors}
\end{figure}
\subsection{Justification of the model decomposition - Gnina}\label{app:gnina}
\begin{figure}[t]
    \centering
    
    \begin{minipage}{.10\textwidth}
    \textbf{Vina}
\end{minipage}%
\hfill
    \begin{minipage}{.42\textwidth}
        \includegraphics[width=\linewidth]{figures/conformation_invariance_corr.png}
    \end{minipage}%
    \hfill
    \begin{minipage}{.42\textwidth}
        \includegraphics[width=\linewidth]{figures/atom_type_invariance_corr_moflow.png}
    \end{minipage}%

        \begin{minipage}{.10\textwidth}
    \textbf{Gnina}
\end{minipage}%
\hfill
    \begin{minipage}{.42\textwidth}
        \includegraphics[width=\linewidth]{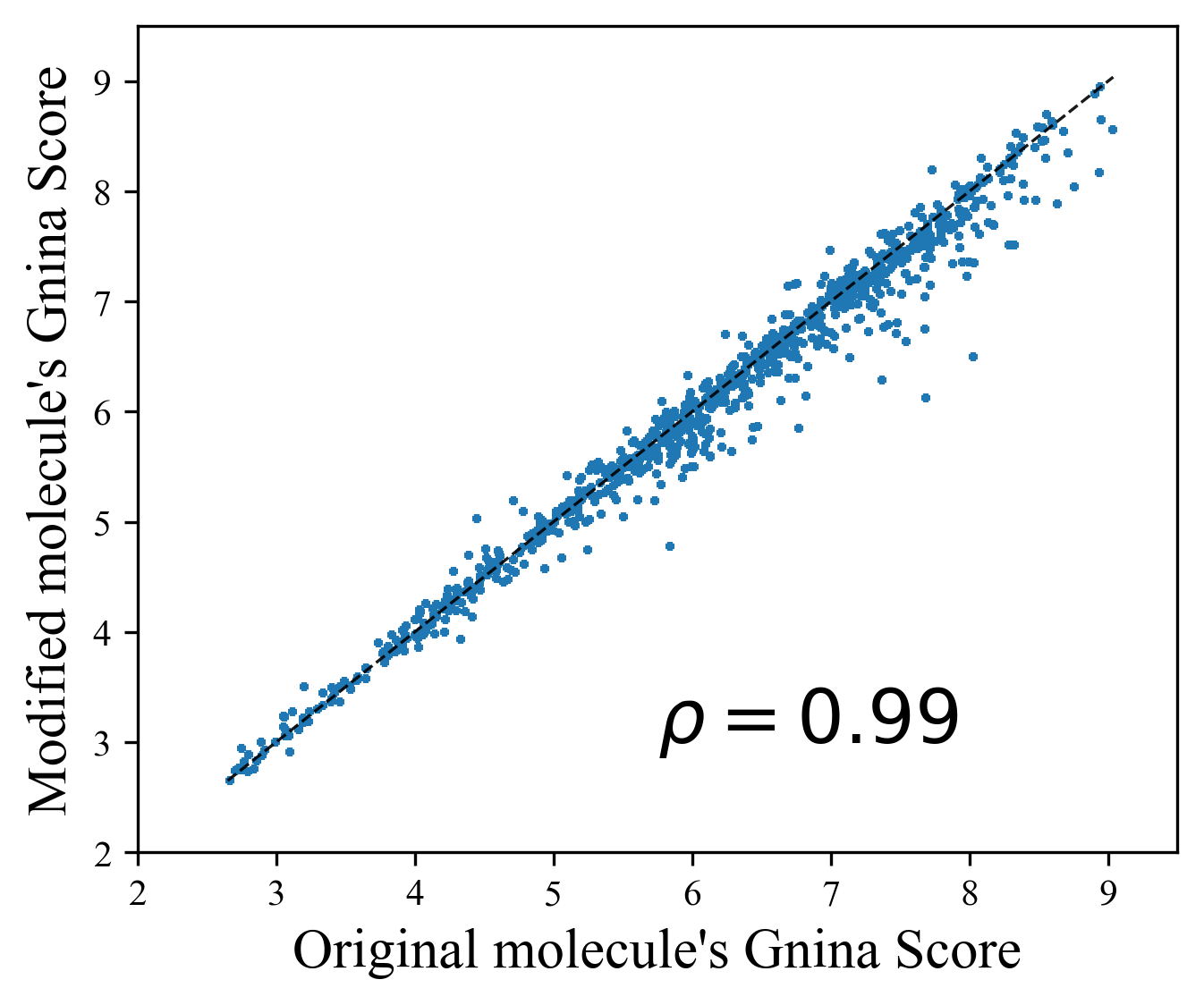}
        \subcaption{Changing initial 3D configuration}
    \end{minipage}%
    \hfill
    \begin{minipage}{.42\textwidth}
        \includegraphics[width=\linewidth]{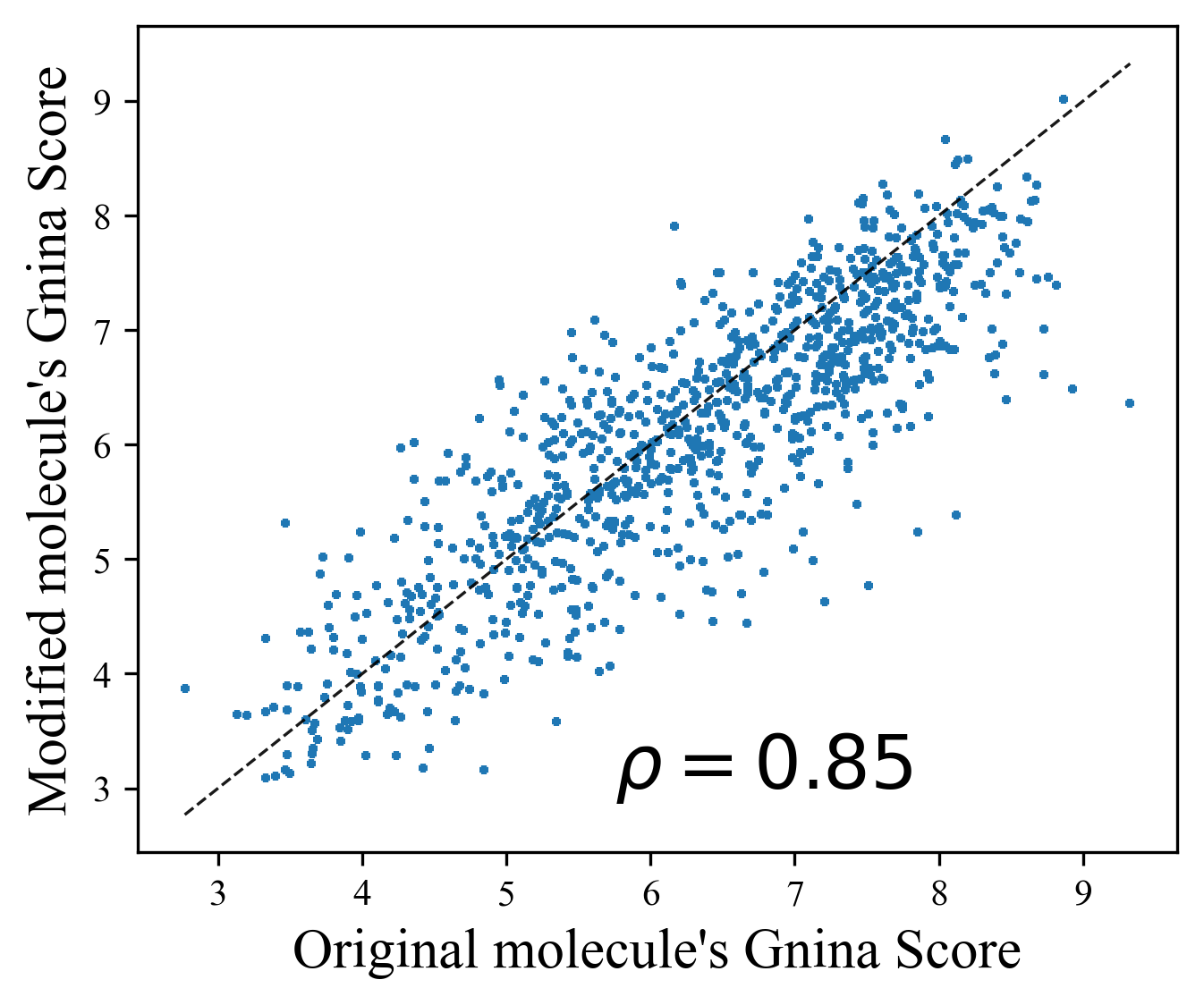}
        \subcaption{Changing atom types and initial 3D configuration}
    \end{minipage}%

    
    \caption{The impact of molecule modifications on binding scores is consistent across Vina and Gnina software.}
    \label{fig:global}
\end{figure}
In section \ref{sec:decomp_just}, we justified our model decomposition choice through empirical evidence indicating that the unlabelled molecular graph inherently contains substantial information, as reflected in the Vina score, even when the ligand's initial 3D configuration and atom type assignment are varied.
We now show that a similar phenomenon can be observed when, instead of Vina, a reportedly significantly more accurate binding software Gnina \citep{gnina} is used.

We repeat the experiments from the section \ref{sec:vina_experiments} with Gnina instead of Vina and find that for changing the initial 3D configuration, there is even smaller impact indicated by $\rho=0.99$ (due to Gnina also performing redocking). For changing atom type assignments, $\rho=0.85$ slightly higher than it was for Vina.
Please see Figure \ref{fig:global} for details.
\subsection{Generality of the findings}
We have demonstrated that significant portion of binding information, as measured by binding software, is contained in the unlabelled molecular graph.
So far, we performed all experiments on the gold-standard dataset for SBBD - CrossDocked2020.
We now repeat the analysis from section 4.1 for PDBind \citep{liu2015pdb}, specifically its \emph{refined} subset containing $\sim 5000$ highest-quality protein-ligand complexes.
Arguably, this is a better benchmark than CrossDocked2020, which has been augmented with protein-ligand complexes, without experimentally measured binding (the "cross-docking" procedure).

We performed the same experiment as in Section 4.1, i.e. measuring the impact of 1) changing the initial 3D configuration, and 2) changing atom types.
In both cases, we measured the binding with both Vina and Gnina.
We observed that the correlation is slightly lower, but still large enough to infer that a significant portion of the binding information is contained in the unlabelled graph.
This suggests that our findings hold more broadly across different datasets, including those compiled from experimental measurements. Please see Table \ref{tab:pdbind-results} for more details.

\begin{table}[h]
    \caption{Unlabelled molecular graph contains a significant portion of binding information. Findings hold across different binding software and datasets.}
    \label{tab:pdbind-results}
    \centering
    \resizebox{0.7\linewidth}{!}{
    \begin{tabular}{L{3.1cm} c c c}
    \toprule
       Experiment  &  Software & CrossDocked2020 & PDBind \\
    \midrule
    Changing atom & Vina & 0.827 & 0.781 \\
    types & Gnina & 0.849 & 0.797 \\
    \midrule
    Changing intitial 3D & Vina & 0.971 & 0.953 \\
    configuration & Gnina &  	0.989 	& 0.979 \\
    \bottomrule
    \end{tabular}
    }
\end{table}

\section{TRADEOFF BETWEEN BINDING AFFINITY PREDICTION AND QED}\label{app:ba_qed_tradeoff}
\begin{figure}
    \centering
    \includegraphics[width=0.6\textwidth]{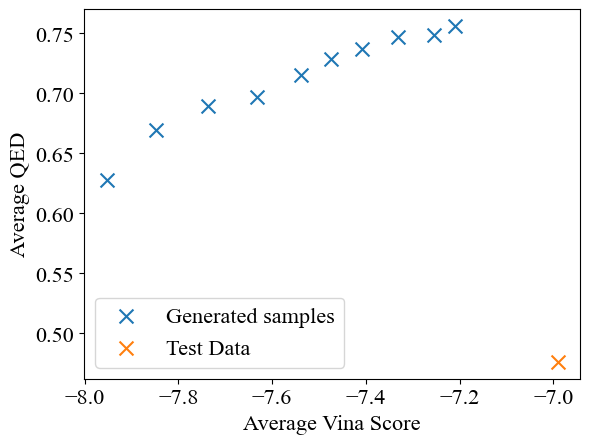}
    \caption{\textbf{Binding Affinity vs QED trade-off}. Our model allows for controlling the predicted binding affinity of generated molecules.}
    \label{fig:ba_vs_qed}
\end{figure}

As seen in Equation (\ref{eq:connectivity_sampler}), our method allows for controlling the binding affinity of the generated ligands. We therefore investigated the performance of the model when we vary the $v_\mathrm{min}, v_\mathrm{max}$ thresholds. Specifically, we evaluated 10 different versions of the unlabelled graph sampler with $(v_\mathrm{min}, v_\mathrm{max}) \text{ set to }  (q_0, q_5), (q_5, q_{10}), \ldots, (q_{45}, q_{50})$, where $q_k$ is the $k^{th}$ percentile of the predictions for $\mathcal{U}^M$ (note that these percentiles depend on the protein pocket). We note a trade-off between QED and Binding Affinity of these models, but all of them are still better on both of these criteria than the samples from the CrossDocked2020 dataset. We show this in Figure \ref{fig:ba_vs_qed}. We set $v_\mathrm{min}=q_5$ and $v_\mathrm{max}=q_{10}$ in our experiments.
\section{CHEMICAL DIVERSITY WITH FIXED UNLABELLED GRAPHS}
\label{app:chemical_diversity}
In section \ref{sec:struct_sampler}, we introduce the unlabelled graph sampler, which uses a set of existing molecular graphs to define possible 2D unlabelled graph structures that can be sampled, i.e. unseen graph structures cannot be sampled.
We now argue that this restriction still leaves substantial flexibility in the chemical space.

We performed the following experiment. We randomly sampled 100 molecules from the ZINC250k dataset. For each molecule, we used its template unlabelled graph to generate novel valid molecules by randomly replacing atom types without violating valency constraints. Whenever we generated a duplicate molecule we tried again. We terminated the procedure when we generated 1000 unique molecules or reached the limit of 10 failed trials due to duplicate generation.

For 98 of these 100 molecules we successfully generated 1000 unique novel molecules sharing the identical 2D unlabelled graph including bond types. For the remaining 2, the procedure terminated after generating 665 and 566 unique novel molecules respectively. This shows that in 98\% of the cases we are able to create at least 1000 novel valid molecules while keeping the unlabelled graph fixed.

Furthermore, we computed the average Tanimoto similarity between the original molecule and the newly generated ones sharing the unlabelled graph. We found that the average Tanimoto similarity was 0.28 ± 0.08 across the 100 sampled molecules from ZINC250k. We compared this number with the average pairwise Tanimoto similarity of the 100 molecules original molecules, which was 0.24 (within one standard deviation). We also contrast this number with the 0.85 similarity threshold, above which molecules have a high probability of having the same activity \citep{tanimoto96sim}.

This shows that, even for a fixed 2D molecular graph with defined bond types, we can define a large number of unique and valid molecules whose similarity to the original molecule is comparable with the pairwise molecular similarity in the ZINC250k dataset and thus proving the substantial chemical variability.
\section{BINDING AFFINITY APPROXIMATION MODEL DETAILS}\label{app:vina_approx}
In this section we provide more details of the scoring model from Eq. (\ref{eq:scoring_model}).
Recall that we want to approximate the binding affinity estimation defined in Eq. (\ref{eq:binding_affinity}) using the protein pocket information and ligand's unlabelled graph structure.
To that end we represent the protein pocket as a graph with residues as nodes.
Each node is equipped with a one-hot encoding of a residue type (1 of 20 possible amino acids) as a feature vector and 3D coordinates computed as a center of mass of the corresponding residue.
Two nodes are connected with an edge if they are at most 15Å apart and we limit each node to have at most 24 neighbours.
We do not use any edge features.
To compute the protein pocket embedding, we use E(3)-Equivariant Graph Neural Networks (EGNNs) \citep{egnn}.

An EGNN layer takes as input a set of node embeddings $\{\mathbf{h}^{(l-1)}_v\}_{v \in V}$, coordinate embeddings $\{\mathbf{x}^{(l-1)}_v\}_{v \in V}$ and edge information $\{e_{uv}\}$ and computes output embeddings $\{\mathbf{h}^{(l)}_v\}_{v \in V}$ $\{\mathbf{x}^{(l)}_v\}_{v \in V}$ for nodes and coordinates respectively. One layer operation can be summarized as follows
\begin{equation}
\label{eq:egnn_update}
\begin{split}
    \mathbf{m}^{(l-1)}_{u \to v} &= \phi_e\left( \mathbf{h}_u^{(l-1)}, \mathbf{h}_v^{(l-1)}, \lVert \mathbf{x}^{(l-1)}_u - \mathbf{x}^{(l-1)}_v \rVert, e_{uv} \right) \\
    \mathbf{x}^{(l)}_v &= \mathbf{x}^{(l-1)}_v + C \sum_{u \neq v}\left( \mathbf{x}_v^{(l-1)} - \mathbf{x}_u^{(l-1)} \right) \phi_x(\mathbf{m}^{(l-1)}_{u \to v}) \\
    \mathbf{m}^{(l-1)}_v &= \sum_{u \neq v} \mathbf{m}^{(l-1)}_{u \to v} \\
    \mathbf{h}_v^{(l)} &= \phi_h(\mathbf{h}_v^{(l-1)}, \mathbf{m}^{(l-1)}_v),
\end{split}
\end{equation}
where $\phi_e, \phi_x$ and $\phi_h$ are learnable functions and $\{\mathbf{h}^{(0)}_v\}_{v \in V}$ are initialized to input node features, i.e. one-hot encodings of amino acid types and $\{\mathbf{x}^{(0)}_v\}_{v \in V}$ are initialized with 3D coordinates of residues' centers of mass.
We encode the protein as the average embedding of the nodes after $L$ layers:
\begin{equation}
    \mathbf{h}_P = \frac{1}{|V|}\sum_{v \in V}\mathbf{h}^{(L)}_v,
\end{equation}
where we use $L=3$ in our experiments. $\phi_e, \phi_x$ and $\phi_h$ are all implemented as concatenation of the input followed by a fully connected 2-layer MLP with hidden and output dimensions of $16$ and the SiLU activation function \citep{silu}.

We represent the ligand's unlabelled graph structure with the following four features: (i) number of nodes, (ii) number of rings, (iii) number of rotatable bonds and (iv) graph diameter.
All these features are standardized by subtracting the mean and dividing by the standard deviation, where these statistics were computed on the train set.
The standardized ligand features are subsequently concatenated with $\mathbf{h}_P$ and passed through 5 layers of a fully connected MLP with 50 hidden units, ReLU activation functions and 1 output unit. The model has a total of 15k trainable parameters.
\subsection{Training details}\label{app:scoring_model_training}
The dataset for training the scoring model consists of triples $(G^P, G^M, y)$ representing the protein, ligand and the binding affinity estimated with Vina respectively.
We randomly select 10\% of the protein pockets for validation and the remaining 90\% for training.
The model is trained to minimize the mean squared error with the Adam optimizer \citep{adam} with default parameters of its PyTorch implementation \citep{pytorch}.
The model was trained with batch size 128 for 10000 steps, after which we did not observe improvements in the validation loss.
Model training took approximately 45 minutes on an Apple M1 CPU.

\section{RDKit POSE GENERATION VARIABILITY}\label{app:pose_var}
As mentioned in Section \ref{seq:atom_sampler}, we use RDKit to generate 3D pose for a molecule given its molecular graph.
We now argue that there is substantial variation in the poses generated by RDKit for a fixed molecular graph.
Specifically, we performed the following experiment. We randomly sampled 1000 molecules from the ZINC250k dataset.
We subsequently generated 20 poses for each and computed their pairwise distances.
Where for two poses of the same molecule we define their distance as the lowest possible root-mean-square deviation (RMSD) (computed with \texttt{rdkit.AllChem.GetBestRMS(mol1, mol2)}).
We used the following pseudocode.
\begin{verbatim}
mols = random.sample(ZINC250k, k=1000)
variation = []
for mol in mols:
    poses = [generate_pose(mol) for i in range(20)]
    variation.append(average_pairwise_distance(poses))
mean(variation), std(variation)
\end{verbatim}
We found that the average pairwise distance between generated poses is 1.60 Å ± 0.46 Å.
This average value is higher than e.g. the lengths of C-C, C-H, or C-F bonds suggesting that there is considerable variation in the poses generated by RDKit and it is not merely performing rigid transformations of the same pose.
\section{LIGAND'S CENTER OF MASS PREDICTOR}\label{app:com_predictor}
As mentioned in Section \ref{seq:atom_sampler}, in order to accurately estimate the binding affinity for a protein-ligand pair with Vina, the ligand's 3D configuration must be in the vicinity of the protein 3D configuration.
The reason for that is that the scoring function defined in Eq. (\ref{eq:scoring_function}) vanishes for atom pairs that are far apart \citep{scoring_function}.
Consequently, gradient based optimization used to find the best 3D configuration (Eq. (\ref{eq:binding_affinity})) will return the original configuration, because all gradients are zero whenever the protein and ligand are too far apart.

To that end, we train a model, which predicts the center of mass of ligand's 3D configuration based on the protein pocket.
We subsequently use the model to place the ligands' 3D configurations computed with RDKit at the predicted center of mass.
As model parametrization we use the EGNN as described in Appendix \ref{app:vina_approx}.
The model output is defined as:
\begin{equation}
    \hat{\mathbf{x}} = \frac{1}{|V|}\sum_{v \in V}\mathbf{x}^{(L)}_v,
\end{equation}
where $\mathbf{x}^{(L)}_v$ is defined in Eq. (\ref{eq:egnn_update}).
As shown in \citep{egnn}, $\hat{\mathbf{x}}$ is then equivariant to translations and rotations as required for the center of mass predictor model.
In our experiments, we used $L=4$ layers and a hidden dimension of $16$. The model has 8k trainable parameters and \textit{both its number of parameters and runtime were included when reporting the results for SimpleSBDD models in Table \ref{tab:main_results}}.

To train the model, we used the training protein-ligand pairs of the CrossDocked2020 \citep{crossdocked} dataset described in Section \ref{sec:results}.
We further split them into train and validation sets to monitor overfitting using a 85-15 train-validation split.
Similarly to \citep{pocket2mol}, the split was performed to ensure that all validation protein pockets will have sequence similarity lower than 30\% to all training protein pockets.

The model was trained with Adam optimizer \citep{adam} with default parameters of its PyTorch implementation \citep{pytorch} to minimize the Euclidean distance between the predicted and ground truth centers of mass.
We used a batch size of $64$ and trained the model for 10000 steps until the validation loss converged to $\sim 1.15$Å.
Training took approximately 25 minutes on an Apple M1 CPU.

\section{ADDITIONAL MOLECULAR METRICS}\label{app:additional_metrics}
Here we provide additional metrics to complement the ones reported in Table \ref{tab:main_results}, which were omitted for presentation clarity.
Namely, we include the Lipinski's Rule of Five and LogP, which were defined in Section \ref{sec:prelims}.
For Lipinski's Rule of Five, $5$ is the highest possible value.
For LogP we do not indicate which method is the best as there is no straightforward criterion to compare LogP values.
In \citep{pocket2mol} it is noted that values in the $(-0.4, 5.6)$ interval are considered good drug candidates.
Note that for SimpleSBDD, the value of LogP is the closest to the midpoint of that interval $(2.6)$. None of the other baseline methods reported LogP or the Lipinski's Rule of Five.

\subsection{Novelty of generated molecules}
In Section \ref{sec:results} we report the novelty of all the methods as the average Tanimoto dissimilarity to the most similar molecule in the training set.
Since our method also uses ZINC250k dataset, we also measured the novelty when treating the ZINC250k dataset as the training set.
We differentiate between these two metrics as Novelty$_{\text{CD}}$ (treating CrossDocked2020 as the training set) and Novelty$_{\text{Z}}$ (treating ZINC250K as the training set).
We report the numbers in Table \ref{tab:additional_metrics}.

\begin{table}[h]
    \caption{Additional metrics. Error bars correspond to the standard deviation across test protein pockets.}
    \label{tab:additional_metrics}
    \centering
    \resizebox{0.7\linewidth}{!}{
    \begin{tabular}{L{3.1cm} c c c c}
    \toprule
         & Lipinski (↑) & LogP & Novelty$_{\text{CD}}$ (↑) & Novelty$_{\text{Z}}$ (↑) \\
    \midrule
         Test set & 4.34 \textcolor{gray}{± 1.14} &0.89 \textcolor{gray}{± 2.73} & - & -\\
         \cmidrule(lr){1-5}
        DiffSBDD (\citeyear{diffsbdd}) & 4.73 \textcolor{gray}{± 0.69}& 1.00 \textcolor{gray}{± 1.90}& \textbf{0.54}  \textcolor{gray}{± 0.14} & \textbf{0.52}  \textcolor{gray}{± 0.13}\\
        Pocket2Mol (\citeyear{pocket2mol}) & 4.90 \textcolor{gray}{± 0.35} & 1.71 \textcolor{gray}{± 1.98}&  0.45  \textcolor{gray}{± 0.16} & 0.46  \textcolor{gray}{± 0.15}\\
        FLAG (\citeyear{flag}) & 4.94 \textcolor{gray}{± 0.14} & 0.63 \textcolor{gray}{± 2.38}& 0.44 \textcolor{gray}{± 0.17} & 0.46 \textcolor{gray}{± 0.15}\\
        DrugGPS (\citeyear{drug-gps}) & 4.92 \textcolor{gray}{± 0.12} & 0.91 \textcolor{gray}{± 2.15}& 0.47 \textcolor{gray}{± 0.15} & 0.48 \textcolor{gray}{± 0.15}\\
        TargetDiff (\citeyear{target-diff}) & 4.59 \textcolor{gray}{± 0.83} & 1.37 \textcolor{gray}{± 2.37}& 0.47  \textcolor{gray}{± 0.14} & 0.46  \textcolor{gray}{± 0.13}\\
        DecompDiff (\citeyear{decompdiff}) & 4.64 \textcolor{gray}{± 0.90} & 1.57 \textcolor{gray}{± 2.12}& 0.52  \textcolor{gray}{± 0.13}& 0.48  \textcolor{gray}{± 0.12}\\
        D3FG (\citeyear{D3FG}) & 4.97 \textcolor{gray}{± NA} & 2.82 \textcolor{gray}{± NA}& - & -\\
        EQGAT-diff (\citeyear{EQGAT-diff}) & 4.66 \textcolor{gray}{± 0.72} & -& - & -\\
        \cmidrule(lr){1-5}
        SimpleSBDD  & 4.96 \textcolor{gray}{± 0.20} & 3.33 \textcolor{gray}{± 1.49}& 0.51  \textcolor{gray}{± 0.10} & 0.47  \textcolor{gray}{± 0.10}\\
        SimpleSBDD--$\mathcal{PO}$ & \textbf{5.00} \textcolor{gray}{± 0.03} & 3.24  \textcolor{gray}{± 0.79}& 0.50  \textcolor{gray}{± 0.09} & 0.44  \textcolor{gray}{± 0.09}\\
    \bottomrule
    \end{tabular}
    }
\end{table}
\section{BASELINE COMPARISON DETAILS}\label{app:baselines}
In this section we provide more details on the baseline comparison reported in Table \ref{tab:main_results}.

\paragraph{Pocket2Mol} We re-evaluated the method using samples generated with a checkpoint available in the official implementation and obtained results very close to the originally reported.

\paragraph{DiffSBDD} The authors did not report some of the metrics, so we generated the samples using checkpoints provided by the authors in the original implementation. However, when we tried to use the DiffSBDD-inpaint model variant (reportedly the best one) the sampling produces non-sensical molecules comprising of single atoms. We were able to generate proper samples using the DiffSBDD-cond model variant and this is what we report in all our comparisons.

\paragraph{FLAG \& DrugGPS} We used the numbers provided in the original publications. The only exception was Novelty (1 - \texttt{Sim.Train} in the original publications). We were able to compute Novelty with our code using the samples provided by the authors. The results differ significantly and we suspect that the similarity was computed differently, but we were unable to verify as the code repositories do not contain evaluation code.

\paragraph{TargetDiff \& DecompDiff} In the manuscripts, the authors use a different version of the Vina software, so we re-evaluate the methods ourselves. For TargetDiff, we use the samples provided by the authors and for DecompDiff, we generate them using the official implementation and provided checkpoints.

\paragraph{EQGAT-diff} We copy the numbers from the original publication. We were unable to verify or reproduce the results, because evaluation code is not provided. We were also unable to run the method ourselves, because as the authors say in the official code repository: \textit{Note that currently we do not provide the datasets, hence the code is currently just for reviewing and how the model and training runs are implemented.} Also, the trained model checkpoints are provided on request, but only for unconditional generation models, not condintioned on the protein pocket.

\paragraph{D3FG} We copy the results from the original publication, but were unable to verify if the metrics were computed in the same way as there is no implementation available.

\section{SIMPLESBDD WITH PROPERTY OPTIMIZATION}\label{app:sbdd-po}
In this section we provide additional details on the property optimization capabilities of SimpleSBDD described in Section \ref{sec:add_eval}. Specifically, we show in Table \ref{tab:po_results} that by the introduction of the additional molecular property procedure, we improve all of the metrics without sacrificing the diversity of the generated molecules. This comes at the additional computational cost, which is still orders of magnitude lower than baselines.

\begin{table}[]
    \caption{\textbf{Our method allows for simultaneous optimization of binding and other properties}. Comparison of our method w/ and w/o property optimization.
    With explicit property optimization, we can significantly improve all metrics without sacrificing binding affinity. The increased sampling time is still an order of magnitude lower than baseline methods even though running exclusively on CPU. Error bars correspond to the standard deviation across test protein pockets.}
    \label{tab:po_results}
    \centering
    \resizebox{0.75\linewidth}{!}{
    \begin{tabular}{L{3cm} C{2cm} C{1.8cm} c c c C{1.7cm} c}
    \toprule
          & Vina Score (kcal/mol, ↓) & High Affinity (↑) &  QED (↑) & SA (↑) & Diversity (↑) & \#Params (↓) & Time (s, ↓)  \\
    \midrule
         Test set & -6.99 \textcolor{gray}{± 2.16} &  - & 0.48 \textcolor{gray}{± 0.21} & 0.73 \textcolor{gray}{± 0.14} & - &- & - \\
         \cmidrule(lr){1-8}
        SimpleSBDD& -7.78 \textcolor{gray}{± 1.47} & 0.71 \textcolor{gray}{± 0.34} & 0.61 \textcolor{gray}{± 0.18} & 0.69 \textcolor{gray}{± 0.09}& 0.68 \textcolor{gray}{± 0.06}	 &23K & \textbf{3.9}$^\dagger$ \textcolor{gray}{± 0.9} \\
        SimpleSBDD--$\mathcal{PO}$& \textbf{-7.98} \textcolor{gray}{± 1.46} & \textbf{0.75} \textcolor{gray}{± 0.35} & \textbf{0.80} \textcolor{gray}{± 0.10} & \textbf{0.73} \textcolor{gray}{± 0.08}& 0.67 \textcolor{gray}{± 0.06}	 &23K & 115$^\dagger$ \textcolor{gray}{± 11} \\
    \bottomrule
    \end{tabular}
    }
\end{table}

\subsection{Ablation study}
We analyzed how the number of proposal molecules $K$ impacts the performance.
In Section \ref{sec:add_eval} we chose $K=50$ and now investigate what happens for $K=10, 20, 50$.
For this experiment, due to computational constraints, we considered a random subset of 20\% of the test protein targets and generated 50 molecules per each (as opposed to the standard of 100).
Note that the runtime of property optimization scales linearly with $K$.

We found that increasing $K$ improves both QED and SA as expected, as the model optimizes for a mixture of these two properties.
Furthermore, we found that the choice of $K$ does not seem to affect the predicted binding affinity, which strengthens our point that the optimization of atom types for various properties for a fixed unlabeled graph structure does not degrade the predicted binding affinity.
Please see Table \ref{tab:po-ablation} for more details.

\begin{table}[]
    \caption{Increasing the number of proposals $K$ improves molecular metrics without degrading the predicted binding affinity.}
    \label{tab:po-ablation}
    \centering
    \resizebox{0.95\linewidth}{!}{
    \begin{tabular}{l c c c c c}
    \toprule
          & Vina Score (↓) & High Affinity (↑) &  QED (↑) & SA (↑) & Diversity (↑) \\
    \midrule
         Test set & -6.99 \textcolor{gray}{± 2.16} &  - & 0.48 \textcolor{gray}{± 0.21} & 0.73 \textcolor{gray}{± 0.14} & -\\
         \cmidrule(lr){1-6}
         SimpleSBDD--$\mathcal{PO} \ (K=10)$ & -7.935 \textcolor{gray}{± 1.469} & 0.705 \textcolor{gray}{± 0.356} & 0.760 \textcolor{gray}{± 0.107} & 0.715 \textcolor{gray}{± 0.078} & 0.651 \textcolor{gray}{± 0.054} \\ 
         SimpleSBDD--$\mathcal{PO} \ (K=20)$ & -7.934 \textcolor{gray}{± 1.465} & 0.724 \textcolor{gray}{± 0.352} & 0.781 \textcolor{gray}{± 0.100} & 0.724 \textcolor{gray}{± 0.079} & 0.651 \textcolor{gray}{± 0.055} \\ 
         SimpleSBDD--$\mathcal{PO} \ (K=50)$ & -7.939 \textcolor{gray}{± 1.462} & 0.715 \textcolor{gray}{± 0.375} & 0.796 \textcolor{gray}{± 0.094} & 0.734 \textcolor{gray}{± 0.077 } & 0.652 \textcolor{gray}{± 0.055} \\ 
    \bottomrule
    \end{tabular}
    }
\end{table}

\section{DRUG REPURPOSING}\label{app:drug_repurposing}
As mentioned in Section \ref{sec:add_eval}, our scoring model can be used for drug repurposing.
We found that for each protein pocket it takes around 2 seconds to find 100 diverse molecules with much higher predicted binding affinity and other improved molecular metrics.
Furthermore, we repeated the experiment with ChEMBL dataset \citep{chembl1, chembl2} and found that $g_\theta$ generalizes well across molecular databases.
See Table \ref{tab:repurposing} for more details

\begin{table}
    \caption{\textbf{Scoring model generalizes across datasets and quickly finds diverse high quality molecules.} The scoring model $g_\theta$ used for drug repurposing. We score the molecular databases: ZINC250k and ChEMBL respectively with $g_\theta$. The best scoring ones are compared with the reference molecule. Error bars correspond to the standard deviation across test protein pockets.}
    \label{tab:repurposing}
    \centering
    \resizebox{0.75\linewidth}{!}{
    \begin{tabular}{L{2.5cm} C{2.2cm} c C{1.8cm} c c}
    \toprule
         & Vina Score (↓) & QED (↑) & SA (↑)& Diversity (↑) & Time (s, ↓)\\
    \midrule
         Test set & -6.99 \textcolor{gray}{± 2.16} & 0.48 \textcolor{gray}{± 0.21} & 0.73 \textcolor{gray}{± 0.14} & - & - \\
        \cmidrule(lr){1-6}
        $g_\theta$ (ZINC) & -7.82 \textcolor{gray}{± 1.47} & \textbf{0.66} \textcolor{gray}{± 0.15} & \textbf{0.78} \textcolor{gray}{± 0.08} & 0.68 \textcolor{gray}{± 0.05} & \textbf{1.8} \textcolor{gray}{± 0.4}\\
        $g_\theta$ (ChEMBL) & \textbf{-8.03} \textcolor{gray}{± 1.56} & 0.56 \textcolor{gray}{± 0.15} & \textbf{0.79} \textcolor{gray}{± 0.08} & 0.67 \textcolor{gray}{± 0.07} & 2.6 \textcolor{gray}{± 1.0}\\
    \bottomrule
    \end{tabular}
    }
\end{table}

\section{REPRESENTATION LIMITS OF GNNS FOR SBDD}\label{app:repr_limits}
In this section we prove claims made in Section \ref{sec:th_repr}. We begin with the notion of indistinguishability for LU(3D)-GNNs. Recall that a layer of a LU-GNN network can be summarized as follows
\begin{equation}
\begin{split}
    m_{u \to v}^{(l-1)} &= \phi(h_u^{(l-1)}, e_{uv}) \\
    \Tilde{h}_v^{(l-1)} &= \mbox{AGG} \lbrace m_{u \to v}^{(l-1)} \: | \: u \in N(v) \rbrace \\
    h_v^{(l)} &= \mbox{COMBINE} \lbrace h_v^{(l-1)}, \Tilde{h}_v^{(l-1)} \rbrace,
\end{split}
\end{equation}
where $h_v^{(l)}$ is the feature vector of node $v$ after applying $l$ layers and $h_v^{(0)}=h_v$ are $v$'s input features. A layer of a LU3D-GNN is defined analogously with messages depending on distances to neighbouring nodes:
\begin{equation}
    m_{u \to v}^{(l-1)} = \phi(h_u^{(l-1)}, e_{uv}, \lVert x_u - x_v \rVert),
\end{equation}
where $x_v$ are 3D coordinates of $v$.
Before we discuss indistinguishability, we formally define it.
\begin{definition}[Indistinguishability]
    We say that graphs $G_1=(V, E_1)$, $G_2=(W, E_2)$ are indistinguishable by LU(3D)-GNNs if there exist node orderings $(v_1, v_2, \dots, v_N)$ and $(w_1, w_2, \dots, w_N)$ of $V$ and $W$ respectively s.t. for any choice of $\phi, \mbox{AGG}, \mbox{COMBINE}$ and network depth $L$: $$h_{v_i}^{(l)}=h_{w_i}^{(l)} \quad \forall i=1, \dots, N, l=1, \dots, L$$
    We denote indistinguishability by $G_1 \equiv G_2$ for LU-GNNs and by $G_1 \equiv_{3D} G_2$ for LU3D-GNNs.
\end{definition}
In other words, $G_1, G_2$ are not distinguishable by LU(3D)-GNNs, if we can pair up their nodes in such a way that corresponding embeddings remain identical in both graphs after any number of layers of any LU(3D)-GNN.
We first note that, since LU3D-GNNs are strictly more expressive than LU-GNNs, it holds that:
\begin{equation}
    G_1 \equiv_{3D} G_2 \implies G_1 \equiv G_2.
\end{equation}
A useful construct for determining indistinguishability is that of computation trees \citep{comp_trees}:
\begin{definition}[Computation trees]
    For a graph $G = (V, E)$, the computation tree of node $v$ is defined recursively as follows:
    \begin{enumerate}
        \item $T_G^{(1)}(v)=h_v$, where $h_v$ is $v$'s feature vector,
        \item for $l \ge 2$, the root of $T_G^{(l)}(v)$ is $h_v$, its children are $\{T_G^{(l-1)}(u) \: | \: u \in N(v) \}$ and the edge connecting the root to $T_G^{(l-1)}(u)$ inherits features $e_{vu}$ from $G$.
    \end{enumerate}
We denote the multiset of depth-$L$ computation trees of $G$ as $\{T_G^{(L)}(v) \}_{v \in V}$.
\end{definition}
Since the multiset of depth-$L$ computation trees represents all information LU-GNNs can extract \citep{comp_trees}, we see that
\begin{equation}
    G_1 \equiv G_2 \Leftrightarrow \: \forall_L \: \: \{T_{G_1}^{(L)}(v) \}_{v \in V_1} = \{T_{G_2}^{(L)}(v) \}_{v \in V_2}
\end{equation}
and analogously for $\equiv_{3D}$ whenever the edges of the computation trees carry the information about the distances between nodes. Using the computation tree terminology, we can now prove Lemma \ref{lemma:3d_single_body}.
\begin{figure}[t]
    \centering
    \includegraphics[width=0.95\textwidth]{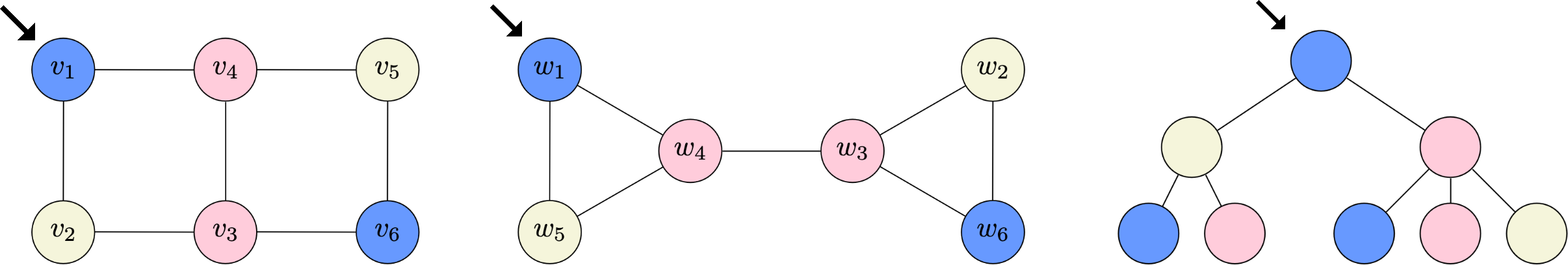}
    \caption{\textbf{Construction for Lemma \ref{lemma:3d_single_body}.} Left and Middle: A pair of non-isomorphic connected graphs differing in all graph properties listed in Lemma \ref{lemma:3d_single_body}. Right: An example computation tree of depth three, which is identical for nodes $v_1$ and $w_1$.}
    \label{fig:3d_single_body}
\end{figure}
We recall it for completeness:
\singlebody*
\begin{proof}
In Figure \ref{fig:3d_single_body} (left and middle), we provide an example of a pair of non-isomorphic connected graphs, which differ in all mentioned graph properties. Colors indicate feature vectors: nodes with the same color have identical features. Edge features can be defined as any function of their endpoints, i.e. $e_{uv}=f(h_u, h_v)$ for any $f$. All edges are of equal length. For clarity of presentation, the graphs are presented in 2D, but any 3D configuration preserving equal edge lengths could be used instead.

In the figure, we show node orderings such that $T_{G_1}^{(l)}(v_i)=T_{G_2}^{(l)}(w_i)$ for all $i=1, \dots, 6$ and $l \ge 1$. As an example, we show $T_{G_1}^{(3)}(v_1)$ in Figure \ref{fig:3d_single_body} (Right), which is identical to $T_{G_2}^{(3)}(w_1)$.
\end{proof}

We now move on to proving Proposition \ref{prop:multi_body}. We recall it for completeness:
\multibody*
\begin{proof}
    We begin with proving $\ref{prop:1}$. We will abuse notation slightly and for a graph $G=(V, E)$ write $v \in V$ and $v \in G$ interchangeably. We will also write $N_G(v)$ to denote the neighbourhood of $v$ in $G$ whenever we want to emphasize which graph we are considering. Let $C_i=\mathcal{C}(P, G_i)$ for notation brevity and let $(v_1, \dots, v_N)$ and $(w_1, \dots, w_N)$ be node orderings of $G_1$ and $G_2$ respectively such that $T^{(l)}_{G_1}(v_i)=T^{(l)}_{G_2}(w_i)$, for all $i=1, \dots, N$ and $l \ge 1$. We can find such an ordering, because $G_1 \equiv G_2$. Let now $(p_1, \dots, p_K)$ be any ordering of nodes of $P$. We will show that for $(u_1, \dots, u_{N + K})=(p_1, \dots, p_K, v_1, \dots, v_N)$ and $(u'_1, \dots, u'_{N + K})=(p_1, \dots, p_K, w_1, \dots, w_N)$ the following holds:
    \begin{equation}
    \label{eq:main_claim}
        T^{(l)}_{C_1}(u_i) = T^{(l)}_{C_2}(u'_i) \quad \forall l \ge 1.
    \end{equation}
    We start with an auxiliary fact that for all $i, j = 1, \dots, N$ and any $l \ge 1$ the following holds:
    \begin{equation}
    \label{eq:aux_ind}
        \left( T^{(l)}_{G_1}(v_i) = T^{(l)}_{G_1}(v_j) \right) \Rightarrow T^{(l)}_{C_1}(v_i) = T^{(l)}_{C_1}(v_j).
    \end{equation}
    We show (\ref{eq:aux_ind}) by induction. The base case of $l=1$ is trivial since $T_G^{(1)}(v)=h_v$ for any $v$ and $G$. Assume now that (\ref{eq:aux_ind}) holds for $k=1, \dots, l$ and $T^{(l+1)}_{G_1}(v_i) = T^{(l+1)}_{G_1}(v_j)$. We will show that $T^{(l+1)}_{C_1}(v_i) = T^{(l+1)}_{C_1}(v_j)$. We first note that the roots coincide:
    \begin{equation}
    \label{eq:root_aux}
    \mbox{ROOT}(T^{(l+1)}_{C_1}(v_i))=h_{v_i}=h_{v_j}=\mbox{ROOT}(T^{(l+1)}_{C_1}(v_j))
    \end{equation}
    from the base case. Furthermore
    \begin{equation}
    \label{eq:childred_aux}
    \begin{split}
    \mbox{CHILDREN}\left(T^{(l+1)}_{C_1}(v_i)\right) &= \{ T^{(l)}_{C_1}(v) \: | \: v \in N_{C_1}(v_i) \} \\
    & = \{ T^{(l)}_{C_1}(v) \: | \: v \in N_{G_1}(v_i) \} \cup \{ T^{(l)}_{C_1}(p) \: | \: p \in P) \}
    \end{split}
    \end{equation}
    Now since $T^{(l+1)}_{G_1}(v_i) = T^{(l+1)}_{G_1}(v_j)$, it holds that:
    \[
    \{ T^{(l)}_{G_1}(v) \: | \: v \in N_{G_1}(v_i)\} =  \{ T^{(l)}_{G_1}(v) \: | \: v \in N_{G_1}(v_j) \}
    \]
    and therefore by our induction assumption:
    \[
    \{ T^{(l)}_{C_1}(v) \: | \: v \in N_{G_1}(v_i)\} =  \{ T^{(l)}_{C_1}(v) \: | \: v \in N_{G_1}(v_j) \}
    \]
    and subsequently, continuing (\ref{eq:childred_aux}):
    \begin{equation}
    \label{eq:children_aux_ctd}
            \begin{split}
    \mbox{CHILDREN}\left(T^{(l+1)}_{C_1}(v_i)\right) &= \{ T^{(l)}_{C_1}(v) \: | \: v \in N_{C_1}(v_i) \} \\
    & = \{ T^{(l)}_{C_1}(v) \: | \: v \in N_{G_1}(v_i) \} \cup \{ T^{(l)}_{C_1}(p) \: | \: p \in P) \} \\
    &= \{ T^{(l)}_{C_1}(v) \: | \: v \in N_{G_1}(v_j) \} \cup \{ T^{(l)}_{C_1}(p) \: | \: p \in P) \} \\
    &=  \{ T^{(l)}_{C_1}(v) \: | \: v \in N_{C_1}(v_j) \} \\
    & =\mbox{CHILDREN}\left(T^{(l+1)}_{C_1}(v_j)\right).
    \end{split}
    \end{equation}
    Combining (\ref{eq:root_aux}) and (\ref{eq:children_aux_ctd}) we have shown that $T^{(l+1)}_{C_1}(v_i) = T^{(l+1)}_{C_1}(v_j)$ and therefore proven (\ref{eq:aux_ind}). We now proceed to show (\ref{eq:main_claim}) by proving a stronger statement, i.e. that for all $l \ge 1$ and all $i,j=1,\dots, N, k=1,\dots, K$ the following hold:
    \begin{align}
        (1) \: \: & \left(T^{(l)}_{G_1}(v_i) = T^{(l)}_{G_2}(w_j)\right) \Rightarrow T^{(l)}_{C_1}(v_i) = T^{(l)}_{C_2}(w_j) \label{eq:main_ind_1}\\
        (2) \: \: &T^{(l)}_{C_1}(p_k) = T^{(l)}_{C_2}(p_k) \label{eq:main_ind_2}
    \end{align}
    Again, we proceed with a proof by induction and note that the base case of $l=1$ is trivial, because all computation trees are just features of roots. Assume now that (\ref{eq:main_ind_1}) and (\ref{eq:main_ind_2}) hold for $k=1,\dots, l$. We will show they also hold for $k=l+1$. Let us first consider any $k=1,\dots, K$ and note that
    \begin{equation}
    \label{eq:main_ind_p_root}
        \mbox{ROOT}\left(T^{(l+1)}_{C_1}(p_k)\right) = h_{p_k} =\mbox{ROOT}\left(T^{(l+1)}_{C_2}(p_k)\right).
    \end{equation}
    And for the children:
    \begin{equation}
    \begin{split}
    \label{eq:main_ind_p_children}
    \mbox{CHILDREN}\left(T^{(l+1)}_{C_1}(p_k)\right) & = \{ T^{(l)}_{C_1}(v) \: | \: v \in N_{C_1}(p_k) \} \\
    &= \{ T^{(l)}_{C_1}(p) \: | \: p \in N_{P}(p_k) \} \cup \{ T^{(l)}_{C_1}(v) \: | \: v \in G_1 \}.
    \end{split}
    \end{equation}
    Since $G_1 \equiv G_2$, it holds that $\{ T_{G_1}^{(l)}(v)\}_{v \in G_1}=\{ T_{G_2}^{(l)}(w)\}_{w \in G_2}$ and therefore, by the induction assumption (\ref{eq:main_ind_1}):
    \begin{equation}
    \label{eq:main_ind_2_step_1}
        \{ T^{(l)}_{C_1}(v) \: | \: v \in G_1\} = \{ T^{(l)}_{C_2}(w) \: | \: w \in G_2 \}.
    \end{equation}
    Furthermore, by the induction assumption (\ref{eq:main_ind_2}), it holds that
    \begin{equation}
    \label{eq:main_ind_2_step_2}
        \{ T^{(l)}_{C_1}(p) \: | \: p \in N_{P}(p_k) \} = \{ T^{(l)}_{C_2}(p) \: | \: p \in N_{P}(p_k)\}.
    \end{equation}
    Combining (\ref{eq:main_ind_2_step_1}) and (\ref{eq:main_ind_2_step_2}) yields:
    \begin{equation}
    \begin{split}
    \label{eq:main_ind_p_children_ctd}
    \mbox{CHILDREN}\left(T^{(l+1)}_{C_1}(p_k)\right) & = \{ T^{(l)}_{C_1}(v) \: | \: v \in N_{C_1}(p_k) \} \\
    &= \{ T^{(l)}_{C_1}(p) \: | \: p \in N_{P}(p_k) \} \cup \{ T^{(l)}_{C_1}(v) \: | \: v \in G_1\} \\
    &= \{ T^{(l)}_{C_2}(p) \: | \: p \in N_{P}(p_k) \: \} \cup \{ T^{(l)}_{C_2}(w) \: | \: w \in G_2\} \\
    &= \{ T^{(l)}_{C_2}(w) \: | \: w \in N_{C_2}(p_k) \: \} \\
    &= \mbox{CHILDREN}\left(T^{(l+1)}_{C_2}(p_k)\right)
    \end{split}
    \end{equation}
    and thus (\ref{eq:main_ind_p_root}) and (\ref{eq:main_ind_p_children_ctd}) imply 
    \begin{equation}
    \label{eq:ind_main_2_qed}
        T^{(l+1)}_{C_1}(p_k) = T^{(l+1)}_{C_2}(p_k).
    \end{equation}
    We will now show that $T^{(l+1)}_{C_1}(v_i)=T^{(l+1)}_{C_2}(w_i)$ for all $i$. Consider now any $i=1, \dots, N$ and note that:
    \begin{equation}
    \label{eq:main_ind_g_root}
        \mbox{ROOT}\left(T^{(l+1)}_{C_1}(v_i)\right) = h_{v_i} = h_{w_i} =\mbox{ROOT}\left(T^{(l+1)}_{C_2}(w_i)\right).
    \end{equation}
    Moreover:
    \begin{equation}
    \label{eq:main_ind_g_children}
        \begin{split}
            \mbox{CHILDREN}\left(T^{(l+1)}_{C_1}(v_i)\right) &= \{ T^{(l)}_{C_1}(v) \: | \: v \in N_{C_1}(v_i) \} \\
            &= \{ T^{(l)}_{C_1}(v) \: | \: v \in N_{G_1}(v_i) \} \cup \{ T^{(l)}_{C_1}(p) \: | \: p \in P \}.
        \end{split}
    \end{equation}
    Since $G_1 \equiv G_2$, it holds that $T_{G_1}^{(l+1)}(v_i) = T_{G_2}^{(l+1)}(w_i)$ and therefore \newline $\{ T_{G_1}^{(l)}(v) \: |  \: v \in N_{G_1}(v_i)\} = \{ T_{G_2}^{(l)}(w) \: | \: w \in N_{G_2}(w_i)\}$, so using the induction assumption (\ref{eq:main_ind_1}):
    \begin{equation}
    \label{eq:main_ind_1_step_1}
        \{ T^{(l)}_{C_1}(v) \: | \: v \in N_{G_1}(v_i) \} = \{ T^{(l)}_{C_2}(w) \: | \: w \in N_{G_2}(w_i) \}.
    \end{equation}
    On the other hand, from the induction assumption (\ref{eq:main_ind_2}):
    \begin{equation}
    \label{eq:main_ind_1_step_2}
    \{ T^{(l)}_{C_1}(p) \: | \: p \in P \} = \{ T^{(l)}_{C_2}(p) \: | \: p \in P \},
    \end{equation}
    so from (\ref{eq:main_ind_1_step_1}) and (\ref{eq:main_ind_1_step_2}), we have:
        \begin{equation}
    \label{eq:main_ind_g_children_ctd}
        \begin{split}
            \mbox{CHILDREN}\left(T^{(l+1)}_{C_1}(v_i)\right) &= \{ T^{(l)}_{C_1}(v) \: | \: v \in N_{C_1}(v_i) \} \\
            &= \{ T^{(l)}_{C_1}(v) \: | \: v \in N_{G_1}(v_i) \} \cup \{ T^{(l)}_{C_1}(p) \: | \: p \in P \} \\
            &= \{ T^{(l)}_{C_2}(w) \: | \: w \in N_{G_2}(w_i) \} \cup \{ T^{(l)}_{C_2}(p) \: | \: p \in P \} \\
            &= \{ T^{(l)}_{C_2}(w) \: | \: v \in N_{C_2}(w_i) \} \\
            &= \mbox{CHILDREN}\left(T^{(l+1)}_{C_2}(w_i)\right).
        \end{split}
    \end{equation}
    Using (\ref{eq:main_ind_g_root}) and (\ref{eq:main_ind_g_children_ctd}), we have shown that for all $i=1\dots, N$
    \begin{equation}
    \label{eq:ind_main_2_qed_almost}
    T^{(l+1)}_{C_1}(v_i) = T^{(l+1)}_{C_2}(w_i).
    \end{equation}
    Now take any $i, j=1, \dots, N$, such that $T^{(l+1)}_{G_1}(v_i) = T^{(l+1)}_{G_2}(w_j)$. Since $G_1 \equiv G_2$, it holds that $T^{(l+1)}_{G_1}(v_j) = T^{(l+1)}_{G_2}(w_j)=T^{(l+1)}_{G_1}(v_i)$ and thus:
    \begin{equation}
    \label{eq:ind_main_1_qed}
        T_{C_1}^{(l+1)}(v_i) \stackrel{(\ref{eq:aux_ind})}{=} T_{C_1}^{(l+1)}(v_j) \stackrel{(\ref{eq:ind_main_2_qed_almost})}{=} T_{C_2}^{(l+1)}(w_j).
    \end{equation}
    Putting together (\ref{eq:ind_main_1_qed}) and (\ref{eq:ind_main_2_qed}) concludes the induction step and therefore the proof of (\ref{eq:main_ind_1}) and (\ref{eq:main_ind_2}). An immediate consequence of (\ref{eq:main_ind_1}) is that for all $i=1,\dots, N$ and $l \ge 1$:
    \begin{equation}
        T^{(l)}_{C_1}(v_i) = T^{(l)}_{C_2}(w_i),
    \end{equation}
    which together with (\ref{eq:main_ind_2}) show that for all $i=1,\dots, K+N$ and $l \ge 1$:
    \begin{equation}
        \label{eq:final_qed}
        T_{C_1}^{(l)}(u_i) = T_{C_2}^{(l)}(u'_i)
    \end{equation}
    and thus $C_1 \equiv C_2$, concluding the proof of $\ref{prop:1}$.

    To show $\ref{prop:2}$, we take $G_1$ and $G_2$ to be molecular graphs corresponding to SMILES representations: C1CCC2CCCCC2C1 and C1CCC(C1)C2CCCC2 respectively (see Figure \ref{fig:identical_mols}).
    All heavy atoms are carbon and all bonds are single, so all node and edge features are identical.
    All bonds are also of equal length.
    We also take an arbitrary protein graph $P$ to define complexes $C_1=\mathcal{C}(P, G_1)$, $C_2=\mathcal{C}(P, G_2)$.
    \begin{figure}
        \centering
        \includegraphics[width=0.5\textwidth]{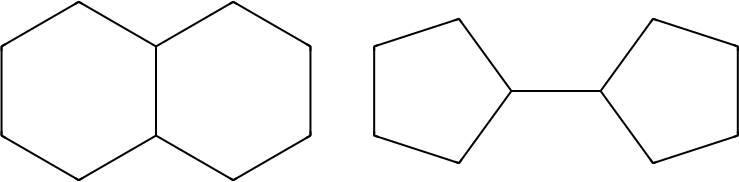}
        \caption{Molecular graphs indistinguishable by LU-GNNs.}
        \label{fig:identical_mols}
    \end{figure}

    The intra-ligand LU3D-GNN produces identical sets of embeddings for $G_1$ and $G_2$, which can be shown in the same way as in Lemma \ref{lemma:3d_single_body}. Similarly for intra-ligand LU-GNNs since $G_1 \equiv_{3D} G_2 \Rightarrow G_1 \equiv G_2$.

    The induced subgraphs of $C_1$ and $C_2$ corresponding to nodes of $P$ are identical and therefore any intra-protein GNN will produce identical sets of embeddings for both.

    Now since $G_1 \equiv G_2$, it follows from $\ref{prop:1}$ that $C_1 \equiv C_2$. Therefore inter protein-ligand LU-GNN will produce identical embeddings for $C_1$ and $C_2$.
\end{proof}

\subsection{Persistent Homology}
Our claims in Propostion \ref{prop:multi_body} also hold when the LUGNN model is additionally equipped with persistent homology (PH) information, which makes it strictly more expressive \citep{immonen2023going}.
This is a strict generalization, because PH computes global topological information such as the number of connected components or the number of independent cycles that LUGNNs cannot compute.

Specifically, we can show that if the two ligand graphs are indistinguishable by LUGNNs+PH then the complex graphs are also indistinguishable with LUGNNs+PH.

\paragraph{Proof}

Graphs $G_1$ and $G_2$ are indistinguishable by LUGNN+PH if and only if: 1) $G_1$ and $G_2$ are  indistinguishable by LUGNN, and 2) $G_1$ and $G_2$ have the same Betti numbers $\beta_0$, $\beta_1$, where
\begin{itemize}
    \item $\beta_0(G)$ is the number connected components of $G$, and
    \item $\beta_1(G)=\beta_0(G) + |E| - |V|$, where $G=(V, E)$, i.e. $V$ is the set of vertices and $E$ is the set of edges.
\end{itemize}

We now assume that $G_1$ and $G_2$ are indistinguishable by LUGNNs and $\beta_0(G_1)=\beta_0(G_2)$ and $\beta_1(G_1)=\beta_1(G_2)$.

In proposition \ref{prop:multi_body} we already showed that the complex graphs $C_1$ and $C_2$ are indistinguishable by LUGNNs, so it suffices to show that $C_1$ and $C_2$ have the same Betti numbers.

From our definition of the complex graph we see that it is always connected, i.e. has a single connected component, and thus $\beta_0(C_1)=\beta_0(C_2)=1$.

To evaluate the 1-st order Betti numbers we denote $G_1=(V_1, E_1)$, $G_2=(V_2, E_2)$, $C_i=\mathcal{C}(P, G_i)$ for some arbitrary protein graph $P=(V_P, E_P)$ using the notation from the paper.

Since $G_1$ and $G_2$ are indistinguishable by LUGNNs+PH, we must have $|V_1|=|V_2|$ and $|E_1|=|E_2|$. Therefore

\begin{align*}
\beta_1(C_1) &= \beta_0(C_1) + |E_P| + |E_1| + |E_P|\cdot|E_1| - |V_P|-|V_1| \\
& =1 + |E_P| + |E_2| + |E_P|\cdot|E_2| - |V_P|-|V_2| \\
& =\beta_1(C_2)    
\end{align*}

and thus $C_1$ and $C_2$ are indistinguishable by LUGNNs+PH.

\section{EMPIRICAL DEMONSTRATION OF THE THEORETICAL ANALYSIS}\label{app:real-world-demo}
In Section \ref{sec:th_repr} we provide a theoretical analysis of the expressivity of Graph Neural Networks (GNNs) and we show that there are inherent limitations in what a GNN can represent for 2-body systems like protein-ligand complexes.
As part of our contribution, we defined a scoring model, which estimates the docking scores based on the unlabelled graph structure and the target protein (Section \ref{sec:fastsbdd}).
Here, we provide an example of a protein pocket and two ligands such that:
\begin{itemize}
    \item They have different unlabelled graph structures
    \item They differ in the predicted docking scores to the given protein pocket
    \item They are identical from the perspective of LUGNNs
\end{itemize}

The protein has an identifier
\newline \texttt{BAZ2A\_HUMAN\_1795\_1898\_0/5mgl\_A\_rec\_5mgl\_7mu\_lig\_tt\_min\_0\_pocket10.pdb} and the two ligands are
\begin{itemize}
    \item SMILES: C1CCC2CCCCC2C1 (two 6-rings, zero 5-rings); Vina score -5.4
    \item SMILES: C1CCC(C1)C2CCCC2 (zero 6-rings, two 5-rings); Vina score -5.7
\end{itemize}

Even though these two ligands have different binding scores to the given protein, our analysis shows that if we defined the scoring model as a GNN, it would always predict the same value for both of them.
See Figure \ref{fig:wl_example}

\begin{figure}[h]
     \centering
     \begin{subfigure}[t]{0.48\textwidth}
         \centering
         \includegraphics[width=\textwidth]{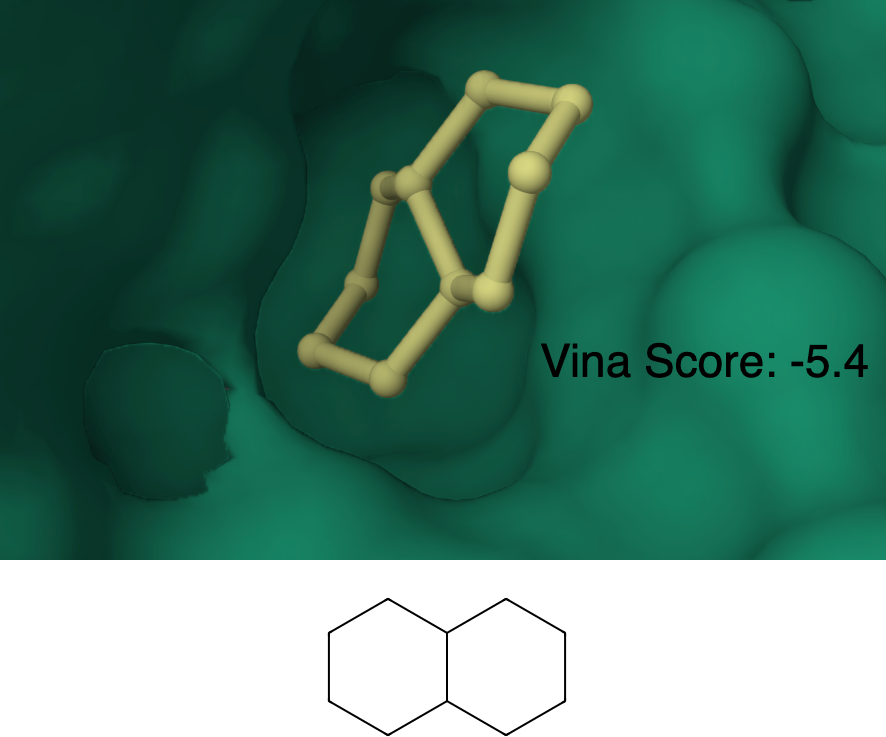}
         \caption{SMILES: C1CCC2CCCCC2C1}
         \label{fig:y equals x}
     \end{subfigure}
     \hfill
     \begin{subfigure}[t]{0.48\textwidth}
         \centering
         \includegraphics[width=\textwidth, height=159pt]{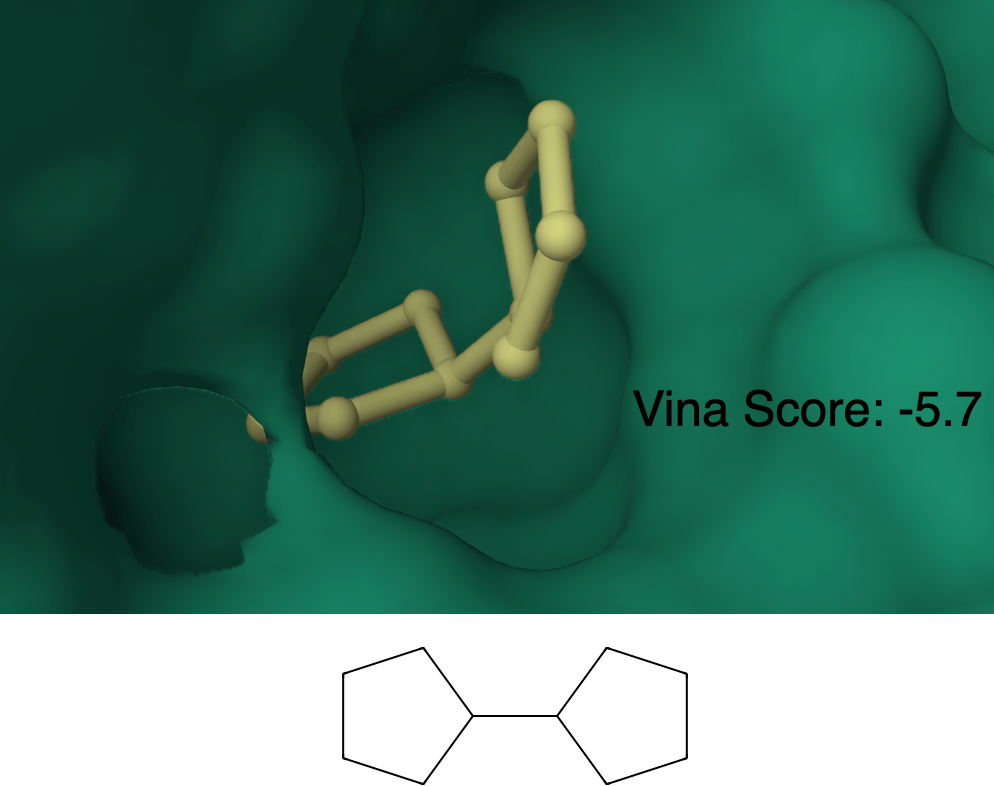}
         \caption{SMILES: C1CCC(C1)C2CCCC2}
         \label{fig:three sin x}
     \end{subfigure}
        \caption{An example of two different molecules with different binding affinities, which are identical from LU-GNN perspective.}
        \label{fig:wl_example}
\end{figure}
\section{LICENSES}\label{app:licences}
\paragraph{Datasets}
\begin{enumerate}
    \item CrossDocked2020 \citep{crossdocked}: GPL-2.0 license
    \item ZINC250k \citep{zinc}: GPL-3.0+ license
    \item ChEMBL \citep{chembl1, chembl2}: CC BY-SA 3.0
\end{enumerate}
\paragraph{Pre-trained models}
\begin{enumerate}
    \item MoFlow \citep{moflow}: MIT License
    \item DiffSBDD \citep{diffsbdd}: MIT License
    \item Pocket2Mol \citep{pocket2mol}: MIT License
    \item TargetDiff \citep{target-diff}: MIT License
    \item DecompDiff \citep{decompdiff}: CC BY-NC 4.0
    \item RGA \citep{rga}: License status unclear
    \item 3D-MCTS \citep{3d-mcts}: License status unclear
    \item Autogrow4 \citep{autogrow4}: Apache License Version 2.0
\end{enumerate}
\paragraph{Chemical software}
\begin{enumerate}
    \item RDKit \citep{rdkit}: BSD 3-Clause License
    \item Vina \citep{vina}: Apache License Version 2.0
    \item Gnina \citep{gnina}: GPL-2.0 license
\end{enumerate}
\end{document}